\newlength{\defbaselineskip}
\newlength{\myfcwidth}
\newlength{\mydiagwidth}
\newcommand{\sysname}{HALP}
\newcommand{\systitle}{High-Accuracy Low-Precision Training}
\begin{document}

\title{\systitle{}}

\author[$\dagger$]{Christopher De Sa}
\author[$\ddagger$]{Megan Leszczynski}
\author[$\ddagger$]{Jian Zhang}
\author[$\dagger$]{Alana Marzoev}
\author[$\ddagger$]{Christopher R. Aberger}
\author[$\ddagger$]{Kunle Olukotun}
\author[$\ddagger$]{Christopher R{\'e}}
\affil[$\dagger$]{Department of Computer Science, Cornell University}
\affil[$\ddagger$]{Department of Computer Science, Stanford University\vspace{4pt}}
\affil[ ]{\footnotesize{\texttt{cdesa@cs.cornell.edu}, \texttt{mleszczy@stanford.edu}, \texttt{zjian@stanford.edu}, \texttt{mam655@cornell.edu}, \texttt{caberger@stanford.edu}, \texttt{kunle@stanford.edu}, \texttt{chrismre@cs.stanford.edu}}}

\maketitle

\newcommand{\numb}[1]{{\color{black}#1}}
\setlength{\textfloatsep}{10pt}
\begin{abstract}
Low-precision computation is often used to lower the time and energy cost of machine learning, and recently hardware accelerators have been developed to support it.
Still, it has been used primarily for inference---not training.
Previous low-precision training algorithms suffered from a fundamental tradeoff: as the number of bits of precision is lowered, quantization noise is added to the model, which limits statistical accuracy.
To address this issue, we describe a simple low-precision stochastic gradient descent variant called \sysname{}.
\sysname{} converges at the same theoretical rate as full-precision algorithms despite the noise introduced by using low precision throughout execution.
The key idea is to use SVRG to reduce gradient variance, and to combine this with a novel technique called \emph{bit centering} to reduce quantization error.
We show that on the CPU, \sysname{} can run up to $\numb{4} \times$ faster than full-precision SVRG and can match its convergence trajectory.
We implemented \sysname{} in TensorQuant, and show that it exceeds the validation performance of plain low-precision SGD on two deep learning tasks.
\end{abstract}

\section{Introduction}
\label{introduction}

Many  machine learning training tasks can be written as an optimization problem over a finite sum of $N$ components
\begin{equation}
  \label{eqnFiniteSumProblem}
  \mbox{minimize } f(w) = \frac{1}{N} \sum_{i=1}^N f_i(w) \hspace{1em}
  \mbox{over } w \in \R^d.
\end{equation}
A standard way of solving these optimization problems over very large training datasets is by using \emph{stochastic gradient descent} (SGD)~\cite{rumelhart1986learning,bottou1991stochastic,bottou2012stochastic}.
Given that training for deep neural networks can take weeks, it is important to produce results quickly and efficiently.
For machine learning \emph{inference} tasks, speed and efficiency has been greatly improved by the use of hardware accelerators such as Google's TPU \cite{jouppi2017datacenter} and Microsoft's Project Brainwave \cite{projectbrainwave,caulfield2017configurable}.
Much of the benefit of these accelerators comes from their use of \emph{low-precision arithmetic}, which reduces the overall cost of computation by reducing the number of bits that need to be processed.
These accelerators have been primarily used for inference and not training, partially because the effects of precision on training are not yet well understood.
This motivates us to study how low-precision can be used to speed up algorithms for solving training problems like (\ref{eqnFiniteSumProblem}).

\begin{table*}[t]
\caption{Asymptotic runtimes of our algorithms on linear models compared with other algorithms, to produce an output within an objective function gap $\epsilon$ from the true solution of a finite-sum strongly convex optimization problem with $N$ components and condition number $\kappa$. In this table, ``FP'' means full-precision, and ``LP'' means low-precision. Note that in every case, $O(\log(1/\epsilon))$ bit full-precision numbers are needed to even represent a solution with objective gap $\epsilon$. To compute the overall runtime, we suppose that the cost of an arithmetic OP is proportional to the number of bits used.}
\label{tabSummary}
\vskip 0.15in
\begin{center}
\small
\begin{small}
\begin{sc}
\begin{tabular}{lccccc}
\toprule
Algorithm & overall runtime & \# of FP ops & \# of LP ops & \# of LP bits  \\
\midrule
SGD & $O( \kappa \log(1/\epsilon) / \epsilon )$ & $O( \kappa / \epsilon )$ & --- & ---  \\
SVRG & $O( (N + \kappa) \log^2(1/\epsilon) )$ & $O( (N + \kappa) \log(1/\epsilon) )$ & --- & --- \\
\textbf{LP-SVRG} & $O( (N + \kappa) \log^2(1/\epsilon) )$ & $O( N \log(1/\epsilon) )$ & $O(\kappa \log(1/\epsilon))$ \\
\textbf{\sysname{}} & $O( N \log^2(1/\epsilon) + \kappa \log(\kappa) \log(1/\epsilon) )$ & $O( N \log(1/\epsilon) )$ & $O(\kappa \log(1/\epsilon))$ & $2 \log(O(\kappa))$ \\
\bottomrule
\end{tabular}
\end{sc}
\end{small}
\end{center}
\vskip -0.1in
\end{table*}

Unfortunately, the systems benefits of low-precision (LP) arithmetic come with a cost.
The round-off or \emph{quantization error} that results from converting numbers into a low-precision representation introduces noise that can affect the convergence rate and accuracy of SGD.
Conventional wisdom says that, for training, low precision introduces a tradeoff of the number-of-bits used versus the statistical accuracy---the fewer bits used, the worse the solution will become. Theoretical upper bounds on the performance of low-precision SGD~\cite{desa2015tamingthewild} and empirical observations of implemented low-precision algorithms~\cite{courbariaux2014training,gupta2015deep,desa2017dmgc,zhang2017zipml} further confirm that current algorithms are limited by this precision-accuracy tradeoff.\footnote{\scriptsize A simple way to avoid this and make an algorithm of arbitrary accuracy would be to increase the number of bits of precision as the algorithm converges.
However, this is unsatisfying as it increases the cost of computation, and we want to be able to run on specialized low-precision accelerators that have a fixed bit width.}

In this paper, we upend this conventional wisdom by showing that \emph{it is still possible to get high-accuracy solutions from low-precision training}, as long as the problem is sufficiently well-conditioned.
We do this with an algorithm called \sysname{} which transcends the accuracy limitations of ordinary low-precision SGD.
We address noise from gradient variance using a known technique called SVRG, stochastic variance-reduced gradient~\cite{johnson2013accelerating}.
To address noise from quantization, we introduce a new technique called \emph{bit centering}.
The intuition behind bit centering is that as we approach the optimum, the gradient gets smaller in magnitude and in some sense carries less information, so we should be able to compress it.
By dynamically re-centering and re-scaling our low-precision numbers, we can lower the quantization noise asymptotically as the algorithm converges.
We prove that, for strongly convex problems, \sysname{} is able to produce arbitrarily accurate solutions with the same linear asymptotic convergence rate as SVRG, while using low-precision iterates with a fixed number of bits.
Our theory also exposes a novel tradeoff between condition number $\kappa$ and precision which suggests that the number of bits needed for linear convergence is $b = \log(O(\kappa))$.
Our contributions are as follows:
\begin{itemize}
  \item In \Cref{sec:lp_svrg}, we introduce and study low-precision SVRG (LP-SVRG), which has no bit centering step. We prove that LP-SVRG converges at the same linear rate as SVRG, but (as conventional wisdom would predict) only converges down to an accuracy limit caused by the low-precision arithmetic. 
  \item In \Cref{sec:halp}, we introduce \sysname{}, \emph{High-Accuracy Low-Precision}, and prove that it converges at the same linear rate as SVRG down to solutions of arbitrarily high accuracy, even though it uses a fixed number of bits of precision for its iterates.
          \item  In \Cref{sec:evaluation}, we show that on a CPU, \sysname{} can compute iterations up to $\numb{3} \times$ faster than plain SVRG on the MNIST dataset and up to $\numb{4} \times$ faster than plain SVRG on a synthetic dataset with 10,000 features. We also evaluate our method as a new algorithm for deep learning. We implement our algorithms in TensorQuant \cite{loroch2017tensorquant} and show that when training deep models\footnote{To simulate training deep models in this paper we ran the computation at full-precision then quantized the updates using the algorithms presented in \Cref{sec:lp_svrg,sec:halp}.} our validation performance can match SVRG and exceed low-precision SGD.
\end{itemize}

Our results about asymptotic convergence rates and time complexity, compared with standard rates for SGD and SVRG, are summarized in Table~\ref{tabSummary}.

\section{Related work}

Motivated by the increasing time and energy cost of training large-scale deep learning models on clusters, several recent projects have investigated decreasing these costs using low-precision arithmetic.
It has been folklore for many years that neural network inference could be done effectively even with 8-bit arithmetic~\cite{vanhoucke2011improving}, and there has been much work recently on \emph{compressing} already-trained networks by (among other things) making some of the weights and activations low-precision~\cite{anwar2015fixed,han2016deep,tan2018transparent}.
This interest in low-precision arithmetic for inference has also led to the development of new hardware accelerators for low-precision machine learning, such as Google's TPU which is based on 8-bit low-precision multiplies~\cite{jouppi2017datacenter}.

Work has also been done on evaluating and guaranteeing the effectiveness of low-precision training. Researchers have gathered empirical evidence for low-precision training in specific settings, although these results have typically not produced empirical support for 8-bit training~\cite{savich2011resource,seide2014onebit,courbariaux2014training,gupta2015deep,strom2015scalable,micikevicius2017mixed}. Researchers have also proven bounds on the error that results from using low-precision computation on convex problems and non-convex matrix recovery problems~\cite{desa2015tamingthewild}.
Recently, \citet{zhang2017zipml} has developed techniques called double sampling and optimal quantization which enable users to quantize the training dataset with provable guarantees on the accuracy for convex linear models.
Using these techniques, they designed and evaluated a hardware accelerator that computes low-precision SGD efficiently.
A similar evaluation of the hardware efficiency of low-precision methods on commodity hardware was done by \citet{desa2017dmgc}, which outlined how quantizing in different ways has different effects on accuracy and throughput when SGD is made low-precision.
While these works showed that low-precision training has many benefits, they all observe that accuracy degrades as precision is decreased.

While SGD is a very popular algorithm, the number of iterations it requires to achieve an objective gap of $\epsilon$ for a strongly convex problem is $O(1 / \epsilon)$.
In comparison, ordinary gradient descent (GD) has an asymptotic rate of $O(\log(1/\epsilon))$, which is known as a \emph{linear rate}\footnote{\scriptsize This is called a linear rate because the number of iterations required is linear in the number of significant figures of output precision needed.}, and is asymptotically much faster than SGD.
There has been work on modifications to SGD that preserve its high computational throughput while also recovering the linear rate of SGD~\cite{roux2012stochastic,shalev2013stochastic}.
SVRG is one such method, which recovers the linear rate of gradient descent for strongly convex optimization, while still using stochastic iterations~\cite{johnson2013accelerating}; recently it has been analyzed in the non-convex case as well and shown to be effective in some settings~\cite{allen2016variance,reddi2016stochastic}.
These \emph{variance-reduced} methods are interesting because they preserve the simple hardware-efficient updates of SGD, while recovering the statistically-efficient linear convergence rate of the more expensive gradient descent algorithm. While we are not the first to present theoretical results combining low-precision with SVRG (\citet{alistarh2016qsgd} previously studied using low-precision numbers for communication among workers in parallel SGD and SVRG), to the best our knowledge we are the first to present empirical results of low-precision SVRG and to propose the additional bit centering technique.

\section{Warmup: Mixing low-precision and SVRG}
\label{sec:lp_svrg}
As a warmup, we derive low-precision SVRG (LP-SVRG), which combines low-precision computation with variance reduction (but without the bit centering step of \sysname{}).
We start with the basic SVRG algorithm for minimizing an objective, Algorithm~\ref{algSVRG}.
Compared with standard stochastic gradient descent, SVRG is able to converge at a linear rate because it periodically uses full gradients $\tilde g_k$ to reduce the variance of its stochastic gradient estimators.
Note that the two outer-loop update options come from the paper that originally proposed SVRG, \citet{johnson2013accelerating}.
In this and subsequent work~\cite{harikandeh2015stopwasting}, it has been standard to use option II for the theoretical analysis (as it simplifies the derivation) while using option I for all empirical experiments.
We will continue to do this for all the SVRG variants we introduce here.
To construct LP-SVRG, we will make the SVRG algorithm low-precision by making the model vectors $w$ and $\tilde w$ low-precision.
First, we will give some details about what we mean by a low-precision number and describe how these numbers are quantized.
Second, we will state and explain LP-SVRG.
Third, we will validate LP-SVRG both theoretically and experimentally.

\begin{algorithm}[t]
  \caption{SVRG: Stochastic Variance-Reduced Gradient}
  \begin{algorithmic}
  \small
  \label{algSVRG}
    \STATE \textbf{given:} $N$ loss gradients $\nabla f_i$, number of epochs $K$, epoch length $T$, step size $\alpha$, and initial iterate $\tilde w_1$.
    \FOR{$k = 1$ \textbf{to} $K$}
      \STATE $\tilde g_k \leftarrow \nabla f(\tilde w_k) = \frac{1}{N} \sum_{i=1}^N \nabla f_i(\tilde w_k)$
      \STATE $w_{k,0} \leftarrow \tilde w_k$
      \FOR{$t = 1$ \textbf{to} $T$}
        \STATE \textbf{sample} $i$ uniformly from $\{1, \ldots, N\}$
        \STATE $w_{k,t} \leftarrow w_{k,t-1} - \alpha \left(\nabla f_i(w_{k,t-1}) - \nabla f_i(\tilde w_k) + \tilde g_k \right)$
      \ENDFOR
      \STATE \textbf{option I:} set $\tilde w_{k+1} \leftarrow w_{k,T}$
      \STATE \textbf{option II: sample} $t$ uniformly from $\{0, \ldots, T-1\}$, then set $\tilde w_{k+1} \leftarrow w_{k,t}$
    \ENDFOR
    \STATE \textbf{return} $\tilde w_{K+1}$
  \end{algorithmic}
\end{algorithm}

\textbf{Representation. } 
We will consider low-precision representations which store numbers using a limited number of bits, backed by integer arithmetic.\footnote{\scriptsize Although work has also been done on low-precision floating-point numbers, for simplicity we will not discuss them here.}
Specifically, a low-precision representation is a tuple $(\delta, b)$ consisting of a \emph{scale factor} $\delta \in \R$ and a \emph{number of bits} $b \in \N$.
The numbers representable in this format are
\[
  \text{dom}(\delta, b) = \left\{
    -\delta \cdot 2^{b - 1}, \ldots, -\delta, 0, \delta, \ldots, \delta \cdot (2^{b - 1} - 1)
  \right\}.
\]
This is a generalization of standard fixed-point arithmetic, where the scale factor is allowed to be arbitrary rather than being restricted to powers of two.
Low-precision numbers with the same scale factor can be easily added using integer addition, producing a new number with the same scale factor and the same number of bits.\footnote{\scriptsize The result of an addition will have the same number of bits if saturating addition is used. If exact addition is desired, the number of bits must be increased somewhat to prevent overflow.}
Any two low-precision numbers can be multiplied using an integer multiply, producing a new number with a scale factor that is the product of the input scale factors, and a number of bits that is the sum of the two input bit-counts.
If we restrict ourselves in constructing an algorithm to use mostly additions and multiplies of these forms, we can do most of our computation with efficient low-precision integer arithmetic.

\textbf{Quantization. } Now that we have described low-precision representations, we need some way to convert numbers to store them in these representations.
For reasons that have been explored in other work~\cite{gupta2015deep,desa2015tamingthewild}, for our algorithms here we will use \emph{unbiased rounding} (also known as randomized rounding or stochastic rounding).
This involves using a quantization function $Q$ that chooses to round up or down at random such that for any $x$ that is in the interior of the domain of the low-precision representation, $\Exv{Q(x)} = x$.
If $x$ is not in the interior of the domain, $Q$ outputs the closest representable value to $x$ (which will always be either the largest or smallest representable value).
Here, we let $Q_{(\delta,b)}$ denote the function that quantizes into the low-precision representation $(\delta,b)$.
When we use $Q$ to quantize a vector, we mean that all the components are quantized independently.

\begin{algorithm}[t]
  \caption{LP-SVRG: Low-Precision SVRG}
  \begin{algorithmic}
    \small
    \label{algLPSVRG}
    \STATE \textbf{given:} $N$ loss gradients $\nabla f_i$, number of epochs $K$, epoch length $T$, step size $\alpha$, and initial iterate $\tilde w_1$.
    \STATE \textbf{given:} low-precision representation $(\delta, b)$
    \FOR{$k = 1$ \textbf{to} $K$}
      \STATE $\tilde g_k \leftarrow \nabla f(\tilde w_k) = \frac{1}{N} \sum_{i=1}^N \nabla f_i(\tilde w_k)$
      \STATE $w_{k,0} \leftarrow \tilde w_k$
      \FOR{$t = 1$ \textbf{to} $T$}
        \STATE \textbf{sample} $i$ uniformly from $\{1, \ldots, N\}$
        \STATE $u_{k,t} \leftarrow w_{k,t-1} - \alpha \left(\nabla f_i(w_{k,t-1}) - \nabla f_i(\tilde w_k) + \tilde g_k \right)$
        \STATE \textbf{quantize} $w_{k,t} \leftarrow Q_{(\delta,b)}\left( u_{k,t} \right)$
      \ENDFOR
      \STATE \textbf{option I:} set $\tilde w_{k+1} \leftarrow w_{k,T}$
      \STATE \textbf{option II: sample} $t$ uniformly from $\{0, \ldots, T-1\}$, then set $\tilde w_{k+1} \leftarrow w_{k,t}$
    \ENDFOR
    \STATE \textbf{return} $\tilde w_{K+1}$
  \end{algorithmic}
\end{algorithm}

\textbf{Algorithm. } We now construct Algorithm~\ref{algLPSVRG}, LP-SVRG, which modifies SVRG by storing the model vectors $w$ and $\tilde w$ in a low-precision representation $(\delta, b)$ that is passed as an input to the algorithm.
LP-SVRG accomplishes this by first quantizing any value that SVRG would store to $w$.
As a result, the returned solution is \emph{also} low-precision, which can be useful for many applications, such as deep learning training where we want a low-precision model for later fast inference.
Another potential benefit is that all the example gradient computations $\nabla f_i$ are called on low-precision arguments: for many applications, this can significantly decrease the cost of computing those gradients.

\textbf{Theory. } The natural next question is: how does using low-precision computation affect the convergence of the algorithm?
One thing we can say immediately is that LP-SVRG will not converge asymptotically at a linear rate, as it will be limited to producing outputs in the low-precision representation $(\delta, b)$: once it gets as close as possible to the solution in this representation, it can get no closer, and convergence will stop.
The next-best thing we can hope for is that LP-SVRG will converge at a linear rate until it reaches this limit, at which point it will stop converging---and in fact this is what happens.
But before we can prove this, we need to state some assumptions.
First, we require that the objective $f$ is $\mu$-strongly convex
\[
  (x - y)^T (\nabla f(x) - \nabla f(y)) \ge \mu \norm{x - y}^2
\]
and the gradients $\nabla f_i$ are all $L$-Lipschitz continuous
\[
  \norm{ \nabla f_i(x) - \nabla f_i(y) } \le L \norm{x - y}.
\]
In terms of these parameters, the \emph{condition number} of the problem is defined as $\kappa = L / \mu$. 
These assumptions are standard and are the same ones used for the analysis of SVRG \cite{johnson2013accelerating}.
We also need to assume that the global solution $w^*$ to our problem is within the range of numbers that are representable in our low-precision representation.
To ensure this, we require that for any $j$,
\begin{equation}
  \label{eqn:representationbound}
  -\delta \cdot 2^{b - 1} \le (w^*)_j \le \delta \cdot (2^{b - 1} - 1).
\end{equation}
This is easy to satisfy in practice if we have some bound on the magnitude of $w^*$.
Under these conditions we can provide convergence guarantees for LP-SVRG.
\begin{theorem}
  \label{thmLPSVRG}
  Suppose that we run LP-SVRG (Algorithm~\ref{algLPSVRG}) under the above conditions, using option II for the epoch update.
  For any constant $0 < \gamma < 1$ (a parameter which controls how often we take full gradients),
  if we set our step size and epoch lengths to be
  \[
    \alpha
    =
    \frac{\gamma}{4 L (1 + \gamma)}
    \hspace{2em}
    T
    \ge
    \frac{8 \kappa (1 + \gamma)}{\gamma^2}
  \]
  then the outer iterates of LP-SVRG will converge to an accuracy limit at a linear rate
  \[
    \Exv{ f(\tilde w_{K+1}) - f(w^*) }
    \le
    \gamma^K \left( f(\tilde w_1) - f(w^*) \right)
    +
    \frac{2 d \delta^2 L}{ \gamma (1 - \gamma) }.
  \]
\end{theorem}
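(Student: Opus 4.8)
The plan is to follow the classical SVRG analysis of \citet{johnson2013accelerating}, tracking the one extra per-step error introduced by the quantization function $Q$. The workhorse will be a one-step inequality for the squared distance $\norm{w_{k,t} - w^*}^2$. Writing $v_{k,t} = \nabla f_i(w_{k,t-1}) - \nabla f_i(\tilde w_k) + \tilde g_k$ for the variance-reduced gradient estimator, we have $u_{k,t} = w_{k,t-1} - \alpha v_{k,t}$ and $w_{k,t} = Q_{(\delta,b)}(u_{k,t})$, and the goal is to turn this into a contraction on $f(\tilde w_k) - f(w^*)$ with a constant floor.

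First I would record the two facts about quantization that are needed: (i) unbiasedness of the estimator, $\Exv{v_{k,t} \mid w_{k,t-1}, \tilde w_k} = \nabla f(w_{k,t-1})$, which is immediate; and (ii) a quantization bound of the form $\Exv{\norm{Q(u) - w^*}^2 \mid u} \le \norm{u - w^*}^2 + d\delta^2/4$. Fact (ii) is the one place care is needed: for a coordinate of $u$ in the interior of the representable domain, unbiased rounding produces a cross term that vanishes in expectation plus a variance term at most $\delta^2/4$ (the variance of a two-point distribution supported on an interval of length $\delta$, bounded via AM--GM); for a coordinate of $u$ outside the domain, $Q$ returns the nearest endpoint, which by the representation bound~(\ref{eqn:representationbound}) lies between $w^*$ and $u$ in that coordinate, so the squared distance to $w^*$ only decreases. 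Summing over the $d$ coordinates gives (ii) uniformly, with no hidden dependence on the iterates.

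Next I would chain the estimates: expand $\norm{u_{k,t} - w^*}^2 = \norm{w_{k,t-1} - w^*}^2 - 2\alpha (w_{k,t-1} - w^*)^T v_{k,t} + \alpha^2 \norm{v_{k,t}}^2$, take expectations over $i$, apply (i) together with the convexity inequality $(w_{k,t-1} - w^*)^T \nabla f(w_{k,t-1}) \ge f(w_{k,t-1}) - f(w^*)$, and bound the second moment by the standard SVRG estimate $\Exv{\norm{v_{k,t}}^2} \le 4L\bigl(f(w_{k,t-1}) - f(w^*)\bigr) + 4L\bigl(f(\tilde w_k) - f(w^*)\bigr)$, which follows from $L$-Lipschitz gradients and convexity of the $f_i$. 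Combining with (ii) yields $2\alpha(1 - 2L\alpha)\bigl(f(w_{k,t-1}) - f(w^*)\bigr) \le \norm{w_{k,t-1} - w^*}^2 - \Exv{\norm{w_{k,t} - w^*}^2} + 4L\alpha^2\bigl(f(\tilde w_k) - f(w^*)\bigr) + d\delta^2/4$. Telescoping over $t = 1,\ldots,T$ collapses the distance terms; using $w_{k,0} = \tilde w_k$ and strong convexity $\norm{\tilde w_k - w^*}^2 \le \tfrac{2}{\mu}\bigl(f(\tilde w_k) - f(w^*)\bigr)$, together with option II (which replaces the running average of $f(w_{k,t}) - f(w^*)$ by $\Exv{f(\tilde w_{k+1}) - f(w^*)}$), produces a recursion $\Exv{f(\tilde w_{k+1}) - f(w^*)} \le \rho\,\Exv{f(\tilde w_k) - f(w^*)} + c$.

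It then remains to verify that the stated choices $\alpha = \gamma/(4L(1+\gamma))$ and $T \ge 8\kappa(1+\gamma)/\gamma^2$ force $\rho \le \gamma$ and $c \le 2d\delta^2 L/\gamma$; this is elementary constant-chasing (the binding inequalities reduce to things like $\gamma^2 \le 3$ for $0 < \gamma < 1$). Unrolling over $K$ epochs and summing the geometric series $\sum_{k\ge 0}\gamma^k \le 1/(1-\gamma)$ gives $\Exv{f(\tilde w_{K+1}) - f(w^*)} \le \gamma^K\bigl(f(\tilde w_1) - f(w^*)\bigr) + \tfrac{c}{1-\gamma} \le \gamma^K\bigl(f(\tilde w_1) - f(w^*)\bigr) + \tfrac{2d\delta^2 L}{\gamma(1-\gamma)}$, as claimed. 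The only genuine departure from the textbook SVRG proof is the quantization bound (ii) — in particular handling the out-of-range case so the extra noise is exactly one additive $d\delta^2/4$ per inner step — and I expect that to be the main (though still mild) obstacle; everything else is the standard telescoping argument plus bookkeeping.
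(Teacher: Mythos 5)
Your proposal is correct and follows essentially the same route as the paper's proof: the same quantization lemma (interior case via the $\delta^2/4$ variance of unbiased rounding, exterior case via clipping toward $w^*$), the same SVRG second-moment bound, the same telescoping with option II and strong convexity, and the same constant-chasing (your observation that the floor bound reduces to $\gamma^2 \le 3$ matches the paper's $(1+\gamma)^2 \le 2(2+\gamma)$ step). The only cosmetic difference is that you unroll the recursion by summing the geometric series directly, whereas the paper subtracts the fixed point first; these are equivalent.
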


As a consequence, the number of outer iterations we need to converge to a distance $\epsilon$ from the limit will be $K = \log( (f(\tilde w_1) - f(w^*)) / \epsilon )$, \emph{which is a linear rate, just like SVRG---but only down to an error that is limited by the precision used}.
Note that this theorem exhibits the same tradeoff between bits-of-precision and accuracy that had been previously observed: as the number of bits becomes smaller, a larger $\delta$ will be needed to satisfy (\ref{eqn:representationbound}), and so the accuracy limit (which is $O(\delta^2)$) will become worse.

\textbf{Validation. } To validate LP-SVRG empirically, we ran it on a synthetic linear regression problem.
Figure~\ref{fig:linear_regress} shows that LP-SVRG, with both 8-bit and 16-bit precision, tracks the convergence trajectory of full-precision SVRG until reaching an accuracy floor that is determined by the precision level.
This behavior matches our theoretical predictions.
The results also show that LP-SVRG matches or outperforms low-precision SGD (LP-SGD) both in terms of convergence speed and the eventual accuracy limit.

\begin{figure}
  \centering
  \includegraphics[width=0.7\myfcwidth]{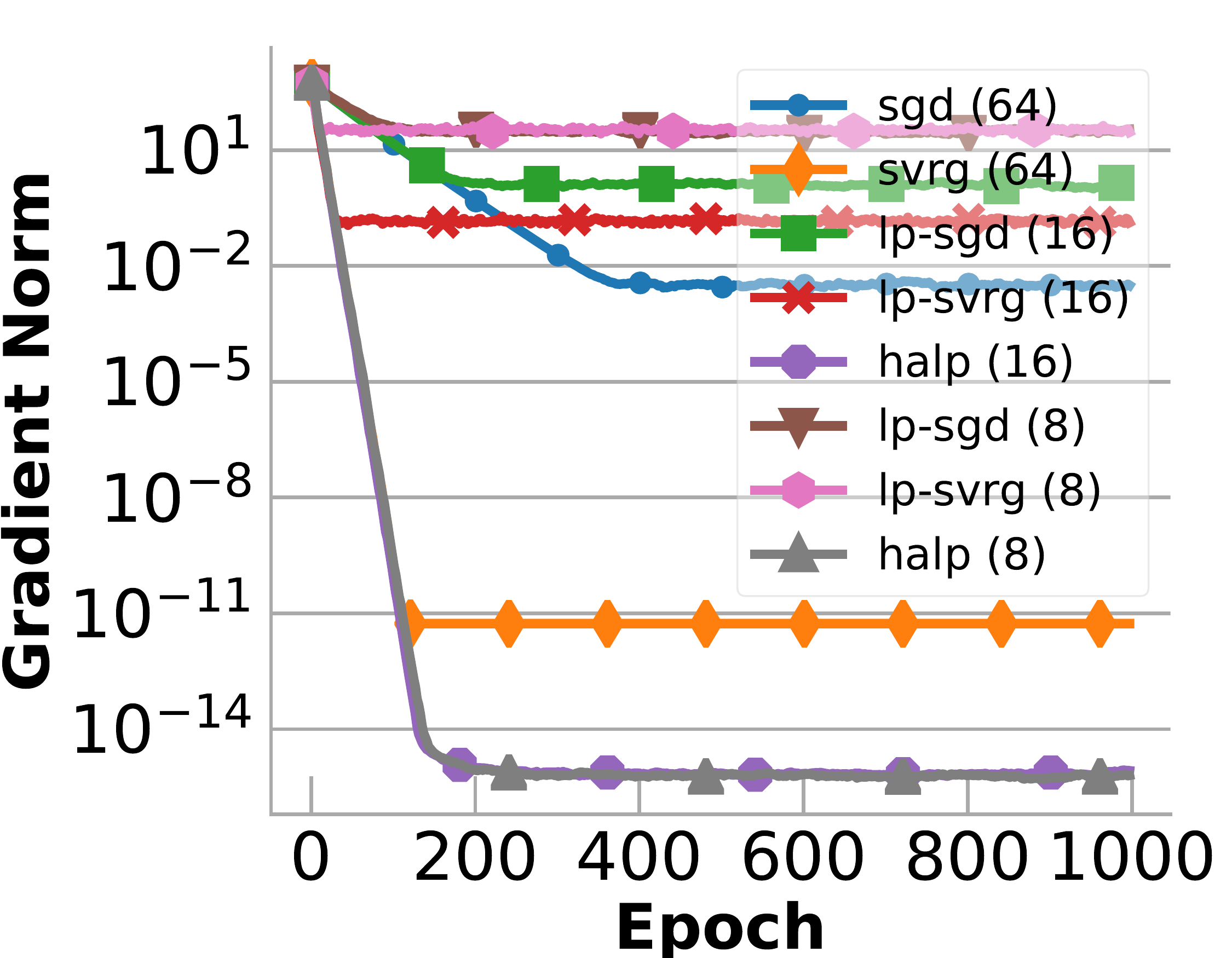}
  \caption{Linear regression on a synthetic dataset with 100 features and 1000 examples generated by scikit-learn's \texttt{make_regression} generator~\cite{scikit-learn}. The epoch length was set to $T = 2000$, twice the number of examples, and the learning rates $\alpha$ and scale factors $\delta$ were chosen using grid search for all algorithms. For all versions of SGD, $\alpha = 2.5 \times 10^{-6}$, and for all versions of SVRG, $\alpha = 5 \times 10^{-3}$. All LP 8-bit algorithms use $\delta = 0.7$ and all LP 16-bit algorithms use $\delta = 0.003$.
  All HALP algorithms use $\alpha = 5 \times 10^{-3}$ and $\mu = 3$.}
  \label{fig:linear_regress}
\end{figure}

\begin{figure*}[t]
\label{figCenterScale}
\centering
\includegraphics[width=\mydiagwidth]{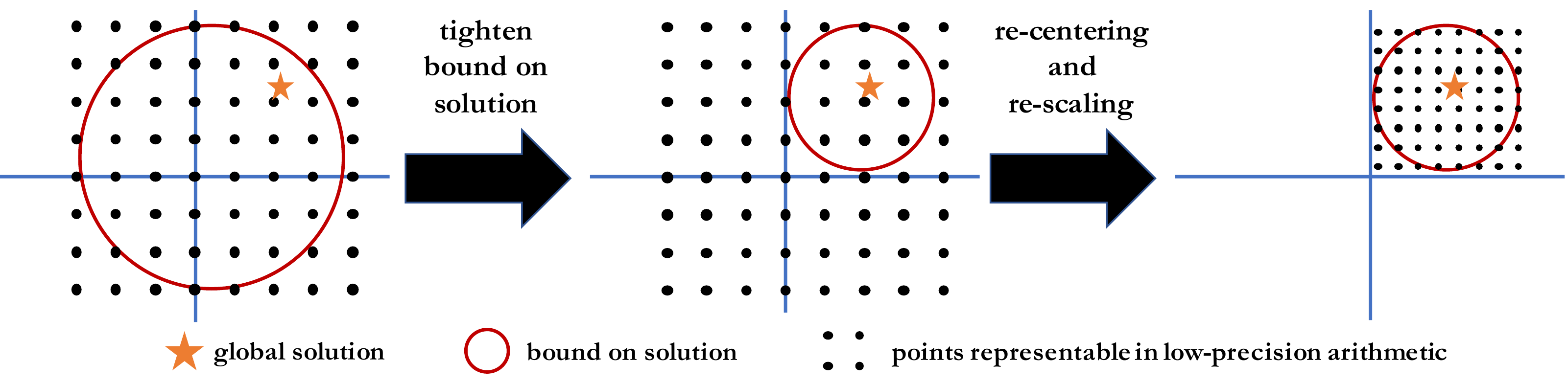}
\vspace{-3mm}

\caption{A diagram of the bit scaling operation in \sysname{}. As the algorithm converges, we are able to bound the solution within a smaller and smaller ball. Periodically, we re-center the points that our low-precision model can represent so they are centered on this ball, and we re-scale the points so that more of them are inside the ball. This decreases quantization error as we converge.}
\end{figure*}

\section{HALP: High-accuracy with low-precision}
\label{sec:halp}
While LP-SVRG converged at a linear rate, it only converged down to a level of accuracy proportional to the delta of quantization of the low-precision representation chosen.
In fact, this is a fundamental limitation of algorithms like LP-SVRG and LP-SGD: we cannot produce a solution that is asymptotically more accurate than the \emph{most accurate solution representable} in the low-precision representation we have chosen.
Since the low-precision representation $(\delta, b)$ in the previous section is chosen \emph{a priori} and is fixed throughout the algorithm, this accuracy limitation is impossible to overcome. While the use of SVRG allowed us to reach this minimum level of accuracy more quickly (at a linear rate, in fact) compared with low-precision SGD~\cite{desa2015tamingthewild}, it has not let us surpass this minimum.

In this section we develop an algorithm that \emph{can} surpass this minimum level of accuracy, and converge to arbitrarily accurate solutions while still using low-precision arithmetic.
First, we will introduce a technique called \emph{bit centering}, which reduces the noise from quantization as the algorithm converges.
Second, we will state and explain our algorithm, \sysname{}.
Third, we will validate \sysname{} by showing theoretically and empirically that it can converge at a linear rate, just like full-precision SVRG.
Finally, we will give some implementation details that show how \sysname{} can be computed efficiently on a class of problems.

\textbf{Bit centering. }
In standard SVRG, each outer iteration is conceptually rewriting the original objective (\ref{eqnFiniteSumProblem}) as
\[
  \frac{1}{N} \sum_{i=1}^N \left(f_i(w) - (w - \tilde w)^T \nabla f_i(\tilde w) + (w - \tilde w)^T \nabla f(\tilde w) \right)
\]
and then running SGD on the rewritten objective.
For our algorithm, \sysname{}, we will do the additional substitution $w = \tilde w + z$, and then minimize over $z$ the objective
\[
  f(\tilde w + z)
  =
  \frac{1}{N} \sum_{i=1}^N \left(f_i(\tilde w + z) - z^T \nabla f_i(\tilde w) + z^T \nabla f(\tilde w) \right).
\]
The reason for this substitution is that as $\tilde w$ comes closer to the solution $w^*$, the range of $z$ we will need to optimize over becomes smaller: by the strong convexity assumption,
\[
  \textstyle
  \norm{ z^* } = \norm{ \tilde w - w^* } \le \frac{1}{\mu} \norm{ \nabla f(\tilde w) }.
\]
This means that if at each outer iteration we \emph{dynamically} reset the low-precision representation to
\[
  (\delta, b) = \left( \frac{ \norm{ \nabla f(\tilde w) } }{\mu (2^{b - 1} - 1)}, b \right)
\]
then we will be \emph{guaranteed} that $z^* = w^* - \tilde w$ will be in the range of points representable in $(\delta, b)$.
Effectively, what we are doing is re-centering and re-scaling the lattice of representable points so that it aligns with our beliefs about where the solution is: this is illustrated in Figure~\ref{figCenterScale}.
Equivalently, we can think about this as being about the gradients: as $\nabla f(\tilde w)$ becomes smaller in magnitude, we can represent it with lower-magnitude error even with a fixed number of bits---but only if we re-scale our low-precision representation.
The important consequence of bit centering is that as the algorithm converges, $\nabla f(\tilde w)$ will become smaller, which will make $\delta$ smaller, which will reduce the quantization error caused by the low-precision arithmetic.

\begin{algorithm}[t]
  \caption{HALP: High-Accuracy Low-Precision SGD}
  \begin{algorithmic}
  \small
  \label{algHALP}
    \STATE \textbf{given:} $N$ loss gradients $\nabla f_i$, number of epochs $K$, epoch length $T$, step size $\alpha$, and initial iterate $\tilde w_1$.
    \STATE \textbf{given:} number of low-precision-representation bits $b$.
    \FOR{$k = 1$ \textbf{to} $K$}
      \STATE $\tilde g_k \leftarrow \nabla f(\tilde w_k) = \frac{1}{N} \sum_{i=1}^N \nabla f_i(\tilde w_k)$
      \STATE $\tilde s_k \leftarrow \frac{ \norm{\tilde g_k} }{\mu (2^{b - 1} - 1)}$
      \STATE \textbf{re-scale: } $\left( \delta, b \right) \leftarrow \left( \tilde s_k, b \right)$
      \STATE $z_{k,0} \leftarrow Q_{(\delta,b)}(0)$
      \FOR{$t = 1$ \textbf{to} $T$}
        \STATE \textbf{sample} $i$ uniformly from $\{1, \ldots, N\}$
        \STATE $u_{k,t} \leftarrow z_{k,t-1} - \alpha \big(\nabla f_i(\tilde w_k + z_{k,t-1})$
        \STATE \hspace{11em}$- \nabla f_i(\tilde w_k) + \tilde g_k \big)$
        \STATE \textbf{quantize: } $z_{k,t} \leftarrow Q_{(\delta,b)}\left( u_{k,t} \right)$
      \ENDFOR
      \STATE \textbf{option I:} set $\tilde w_{k+1} \leftarrow \tilde w_k + z_{k,T}$
      \STATE \textbf{option II: sample} $t$ uniformly from $\{0, \ldots, T-1\}$, then set  $\tilde w_{k+1} \leftarrow \tilde w_k + z_{k,t}$
    \ENDFOR
    \STATE \textbf{return} $\tilde w_{K+1}$
  \end{algorithmic}
\end{algorithm}

\textbf{Algorithm. }
Modifying LP-SVRG by applying bit centering every epoch results in Algorithm~\ref{algHALP}, \sysname{}.
Compared to SVRG, we are now storing $z = w - \tilde w$ in low-precision instead of storing $w$, but otherwise the algorithm has the same structure.
As a result, for problems for which gradients of the form $\nabla f_i(\tilde w + z)$ can be computed efficiently when $z$ is low-precision, \sysname{} can have a higher throughput than SVRG.
Importantly, since $\tilde w$ is still stored in full-precision, the range of $w$ that are representable by the algorithm is still not limited by precision, which is what allows \sysname{} to get arbitrarily close to the optimum.

\textbf{Theory. }
Using the same conditions that we used for LP-SVRG, we can prove that \sysname{} converges at a linear rate, to solutions of \emph{arbitrarily low error}.

\begin{theorem}
  \label{thmHALP}
  Suppose that we run \sysname{} (Algorithm~\ref{algHALP}) under the standard conditions of strong convexity and Lipschitz continuity, using option II for the epoch update.
  For any constant $0 < \gamma < 1$,
  if we use a number of bits
  \[
    \textstyle
    b > 1 + \log_2 \left( 1 + \sqrt{ \frac{2 \kappa^2 d (1 + \gamma)}{\gamma^2} } \right)
  \]
  and we set our step size and epoch lengths to be
  \[
    \alpha = \frac{\gamma}{4 L (1 + \gamma)},
    \,
    T
    \ge
    \frac{
      8 \kappa (1 + \gamma)
    }{
      \gamma^2
      -
      2 \kappa^2 d (1 + \gamma) (2^{b-1} - 1)^{-2}
    },
  \]
  then after running \sysname{} the output will satisfy
  \[
    \Exv{ f(\tilde w_{K+1}) - f(w^*) } \le \gamma^K \left( f(\tilde w_1) - f(w^*) \right).
  \]
\end{theorem}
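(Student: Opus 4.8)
The plan is to reduce the analysis to a per-epoch recursion of the kind underlying \Cref{thmLPSVRG}, and then use bit centering to show that the extra quantization term shrinks geometrically together with the objective gap. Notice that the inner loop of \sysname{} is exactly the inner loop of LP-SVRG rewritten in the shifted coordinate $z = w - \tilde w_k$ with scale factor $\delta_k = \tilde s_k = \norm{\tilde g_k}/(\mu(2^{b-1}-1))$: writing $w_{k,t} = \tilde w_k + z_{k,t}$ gives $w_{k,t} = w_{k,t-1} - \alpha g_{k,t} + \xi_{k,t}$, where $g_{k,t}$ is the usual SVRG gradient estimator and $\xi_{k,t}$ is the quantization noise, conditionally mean-zero with $\Exv{\norm{\xi_{k,t}}^2} \le d\delta_k^2/4$ whenever the pre-quantization iterate lies in the box. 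The one thing we must verify to import the LP-SVRG argument is that $w^* - \tilde w_k \in \text{dom}(\delta_k, b)$, and this is precisely what the bit-centering rule secures: strong convexity gives $\norm{w^* - \tilde w_k} \le \mu^{-1}\norm{\nabla f(\tilde w_k)} = \delta_k(2^{b-1}-1)$, so every coordinate of the shifted optimum is representable.

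Granting this, the same telescoping computation used for \Cref{thmLPSVRG} --- the SVRG recursion of \citet{johnson2013accelerating} under option II, with one extra $d\delta_k^2/4$ per inner step from the quantization variance --- yields, conditionally on $\tilde w_k$,
\[
  \Exv{ f(\tilde w_{k+1}) - f(w^*) \mid \tilde w_k }
  \le
  \rho \left( f(\tilde w_k) - f(w^*) \right) + C \delta_k^2 ,
\]
with $\rho = \frac{1}{\mu\alpha(1-2L\alpha)T} + \frac{2L\alpha}{1-2L\alpha}$ and $C = \frac{d}{8\alpha(1-2L\alpha)}$, i.e. the LP-SVRG bound but with $\delta$ replaced epoch-by-epoch by $\delta_k$.

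The new step is to stop treating $\delta_k$ as a constant. Since each $\nabla f_i$ is $L$-Lipschitz, $f$ is $L$-smooth, and the standard inequality $\norm{\nabla f(\tilde w_k)}^2 \le 2L(f(\tilde w_k) - f(w^*))$ holds, so
\[
  \delta_k^2 = \frac{\norm{\nabla f(\tilde w_k)}^2}{\mu^2(2^{b-1}-1)^2} \le \frac{2L\left(f(\tilde w_k) - f(w^*)\right)}{\mu^2(2^{b-1}-1)^2} .
\]
Substituting this back (inside the conditional expectation, as $\delta_k$ is $\tilde w_k$-measurable) and taking the outer expectation, with $a_k = \Exv{f(\tilde w_k) - f(w^*)}$, gives $a_{k+1} \le \left(\rho + \frac{2LC}{\mu^2(2^{b-1}-1)^2}\right) a_k$. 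It then remains to check that under the stated $\alpha$, $T$, and $b$ this bracketed factor is $\le \gamma$: substituting $\alpha = \gamma/(4L(1+\gamma))$ makes $1 - 2L\alpha = (2+\gamma)/(2(1+\gamma))$, turns $\rho$ into $\frac{8\kappa(1+\gamma)^2}{\gamma(2+\gamma)T} + \frac{\gamma}{2+\gamma}$ and the added term into $\frac{(1+\gamma)}{\gamma(2+\gamma)}\cdot\frac{2\kappa^2 d(1+\gamma)}{(2^{b-1}-1)^2}$; the hypothesis on $b$ is equivalent to $2\kappa^2 d(1+\gamma)(2^{b-1}-1)^{-2} < \gamma^2$ (which also makes the denominator of the $T$-bound positive), and the lower bound on $T$ then makes the three pieces sum to exactly $\gamma$. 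Iterating $a_{k+1} \le \gamma a_k$ over the $K$ epochs gives the claimed bound.

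The main obstacle is getting the per-epoch recursion right despite clipping: when an inner iterate leaves the epoch-dependent box $\text{dom}(\delta_k, b)$, the quantizer projects to the nearest representable point and is no longer unbiased there. The fix is to use the contraction-with-noise form of the quantization guarantee, $\Exv{\norm{Q_{(\delta,b)}(x) - y}^2} \le \norm{x - y}^2 + d\delta^2/4$ for every representable $y$ (applied with $y$ the shifted optimum, which the bit-centering rule has made representable), so the telescoped distance inequality never invokes unbiasedness outside the box. The remaining work --- the constant-chasing verifying $\rho + 2LC\mu^{-2}(2^{b-1}-1)^{-2} \le \gamma$ --- is elementary once the recursion is in hand.
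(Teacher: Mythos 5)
Your proposal is correct and follows essentially the same route as the paper's proof: import the per-epoch LP-SVRG recursion (applied in the shifted coordinate, with the quantization lemma taken relative to the representable shifted optimum $w^* - \tilde w_k$), replace the fixed $\delta$ by the epoch-dependent $\delta_k$, bound $\delta_k^2$ via $\norm{\nabla f(\tilde w_k)}^2 \le 2L\left(f(\tilde w_k) - f(w^*)\right)$, and verify that the resulting contraction factor is at most $\gamma$ under the stated $\alpha$, $T$, and $b$. Your explicit checks of the representability of $w^* - \tilde w_k$ and of the saturating-quantizer case are details the paper delegates to its bit-centering discussion and to its quantization lemma, but they do not change the argument.
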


This theorem shows that we can achieve a linear asymptotic convergence rate even with constant-bit-width low-precision computation in the inner loop.
It also describes an interesting tradeoff between the precision and the condition number.
As the condition number becomes larger while the precision stays fixed, we need to use longer and longer epochs ($T$ becomes larger), until eventually the algorithm might stop working altogether.
This suggests that low-precision training should be combined with techniques to improve the condition number, such as preconditioning.

\textbf{Validation. }
Just as we did for LP-SVRG, we validate these results empirically by running \sysname{} on the same synthetic linear regression problem.
Figure~\ref{fig:linear_regress} shows that \sysname{}, with both 8-bit and 16-bit precision, tracks the linear convergence trajectory of full-precision SVRG until their accuracy becomes limited by the error of the high-precision floating point numbers.
Interestingly, even though all the algorithms in Figure~\ref{fig:linear_regress} use 64-bit floating point numbers for their full-precision computation, \sysname{} actually converges to a solution that is \emph{more accurate}: this happens because when we are very close to the optimum, the quantization error of the floating-point numbers in SVRG actually exceeds that of the low-precision numbers used in \sysname{}.

\textbf{Efficient implementation. }
There are two main challenges to making a \sysname{} implementation fast.
We need to be able to take advantage of low-precision computation when: (1) computing the gradients $\nabla f_i(\cdot)$, and (2) summing up the vectors to compute $u_{k,t}$.
To illustrate how these challenges can be addressed, we choose one common class of objectives, linear models, and show how HALP can be implemented efficiently for these objectives.
(Note that HALP is not limited to simple linear models---we will show empirically that HALP can run in TensorQuant on deep learning tasks.)
A linear model is one in which the objective loss is of the form
$f_i(w) = l_i(x_i^T w)$
for some scalar functions $l_i: \R \rightarrow \R$, which typically depend on the class label, and some \emph{training examples} $x_i$ which we will assume are also stored in a low-precision representation $(\delta_d, b)$.
(For simplicity, we assume that the training examples $x_i$ have the same number of bits as the model variables.)
In this setting, the gradients are
$\nabla f_i(w) = l_i'(x_i^T w) \, x_i$,
and the update step of \sysname{} (before quantization) will be
\begin{align*}
  u_{t}
  &=
  z_{t-1} - \alpha \left(
    \nabla f_i(\tilde w + z_{t-1})
    -
    \nabla f_i(\tilde w)
    +
    \tilde g
  \right) \\
  &=
  z_{t-1} - \alpha \left(
    l_i'\left(x_i^T \tilde w + x_i^T z_{t-1}\right)
    -
    l_i'\left(x_i^T \tilde w \right)
  \right) x_i
  -
  \alpha \tilde g.
\end{align*}
From here, we can address our two challenges individually.
Computing the gradient fast here reduces to computing the dot products $x_i^T z_{t-1}$ and $x_i^T \tilde w$, since the evaluation of $l_i'$ is only a scalar computation.
For the dot product $x_i^T z_{t-1}$, it is easy to take advantage of low-precision arithmetic: this is a dot product of two low-precision vectors, and so can be computed fast using low-precision integer arithmetic.
The dot product $x_i^T \tilde w$ is a bit more tricky: this is unavoidably full-precision (since it involves the full-precision vector $\tilde w$), but it does not depend on $z$, so we can \emph{lift it out} of the inner loop and compute it only once for each training example $x_i$ as we are computing the full gradient in the outer loop.
Computing these two dot products in this way eliminates all full-precision vector operators from the computation of the gradients, which addresses our first challenge.

The second challenge, summing and scaling vectors to compute $u_t$, requires a bit more care.
This task reduces to
\[
  u_{t} = z_{t-1} - \beta x_i - \alpha \tilde g
\]
where $\beta$ is a full-precision scalar that results from the gradient computation.
To make this computation low-precision, we start by using the same lifting trick on $\alpha \tilde g$: we can compute it and quantize it once in the outer loop.
Next the vector-by-scalar product $\beta x_i$ can be approximated efficiently by first \emph{quantizing the scalar} $\beta$, and only then doing the multiply, again using integer arithmetic.
Finally, the summing up of terms to produce $u_t$ can be done easily if we set our scale factors carefully so they are all compatible: in this case, the sum can be done directly with integer arithmetic.
This addresses our second challenge, and eliminates all full-precision vector operations from the inner loop.

\begin{algorithm}[t]
  \caption{LM-\sysname{}: \sysname{} for Linear Models}
  \begin{algorithmic}
  \small
  \label{algLMHALP}
    \STATE \textbf{given:} $N$ loss functions $l_i$ and training examples $x_i$, number of epochs $K$, epoch length $T$, step size $\alpha$, and initial iterate $\tilde w_1$.
    \STATE \textbf{given:} number of low-precision model bits $b$
    \STATE \textbf{given:} low-precision data representation $(\delta_d, b)$
    \FOR{$k = 1$ \textbf{to} $K$}
      \FOR{$i = 1$ \textbf{to} $N$}
        \STATE $\phi_{k,i} \leftarrow x_i^T \tilde w_k$
      \ENDFOR
      \STATE $\tilde g_k \leftarrow \nabla f(\tilde w_k) = \frac{1}{N} \sum_{i=1}^N l'_i(\phi_{k,i}) x_i$
      \STATE $\tilde s_k \leftarrow \frac{ \norm{\tilde g_k} }{\mu (2^{b - 1} - 1)}$
      \STATE \textbf{re-scale:} $\left( \delta_m, b \right) \leftarrow \left( \tilde s_k, b \right)$
      \STATE \textbf{re-scale:} $\left( \delta_i, 2b \right) \leftarrow \left( 2^{-b} \cdot \delta_m, 2b \right)$
      \STATE \textbf{re-scale:} $\left( \delta_s, b \right) \leftarrow \left( 2^{-b} \cdot \delta_m / \delta_d, b \right)$
      \STATE \textbf{quantize:} $\tilde h_k \leftarrow Q_{(\delta_i, b)}(\alpha \tilde g_k)$
      \STATE $z_{k,0} \leftarrow Q_{(\delta_m, b)}(0)$
      \FOR{$t = 1$ \textbf{to} $T$}
        \STATE \textbf{sample} $i$ uniformly from $\{1, \ldots, N\}$
        \STATE $\beta_{k,t} \leftarrow \alpha \left(l'_i(\phi_{k,i} + x_i^T z_{k,t-1}) - l'_i(\phi_{k,i}) \right)$
        \STATE \textbf{quantize:} $\gamma_{k,t} \leftarrow Q_{(\delta_s, b)}(\beta_{k,t})$
        \STATE $u_{k,t} \leftarrow z_{k,t-1} - \gamma_{k,t} x_i - \tilde h_k$
        \STATE \textbf{quantize:} $z_{k,t} \leftarrow Q_{(\delta_m, b)}(u_{k,t})$
      \ENDFOR
      \STATE $\tilde w_{k+1} \leftarrow \tilde w_k + z_{k,T}$
    \ENDFOR
    \STATE \textbf{return} $\tilde w_{K+1}$
  \end{algorithmic}
\end{algorithm}

Explicitly applying our techniques to \sysname{} produces Algorithm~\ref{algLMHALP}.
In Algorithm~\ref{algLMHALP} the model variable $z_{k,t}$ is stored in representation $(\delta_m, b)$, the scale factor $\gamma_{k,t}$ is stored in representation $(\delta_s, b)$, and the offset $\tilde h_k$ is stored in representation $(\delta_i, 2b)$.
Importantly, we can also store the temporary value $u_{k,t}$ in representation $(\delta_i, 2b)$, because: (1) $z_{k,t-1}$ be easily be converted to that representation using a shift-left by $b$ bits, (2) $\gamma_{k,t} x_i$ is already naturally in that representation because $\delta_i = \delta_s \cdot \delta_d$, and (3) $\tilde h_k$ is already in that representation.
As a result, even the quantization to compute $z_{k,t}$ can be done in low-precision, which validates that we have \emph{removed all full-precision vector computations from the inner loop}. We can also use a similar technique to remove all full-precision vector computations from the inner loop of LP-SVRG, although due to space limitations we will not present this algorithm explicitly here.
Counting the number of vector operations used in these algorithms results in the computational complexity numbers presented in Table~\ref{tabSummary}.

\begin{figure*}
\centering
\begin{tabular}{c@{\hspace{0.3em}} c@{\hspace{0.3em}} c@{\hspace{0.3em}} c}
  \subfigure[Training loss on ResNet.]{\label{fig:train_loss_cnn}\includegraphics[width=0.24\linewidth]{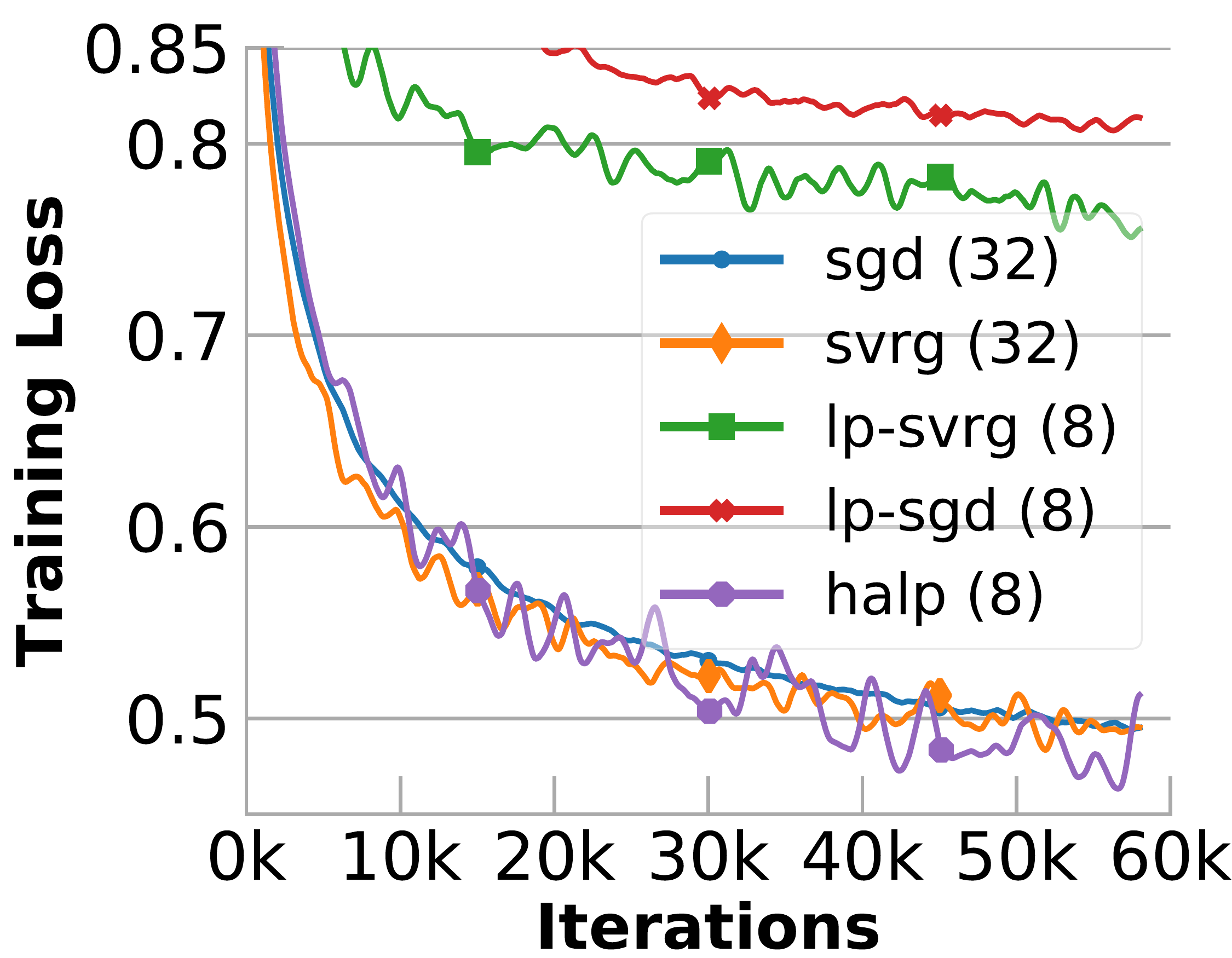}} \hfill
  \subfigure[Val. Accuracy on ResNet.]{\label{fig:val_acc_cnn}\includegraphics[width=0.24\linewidth]{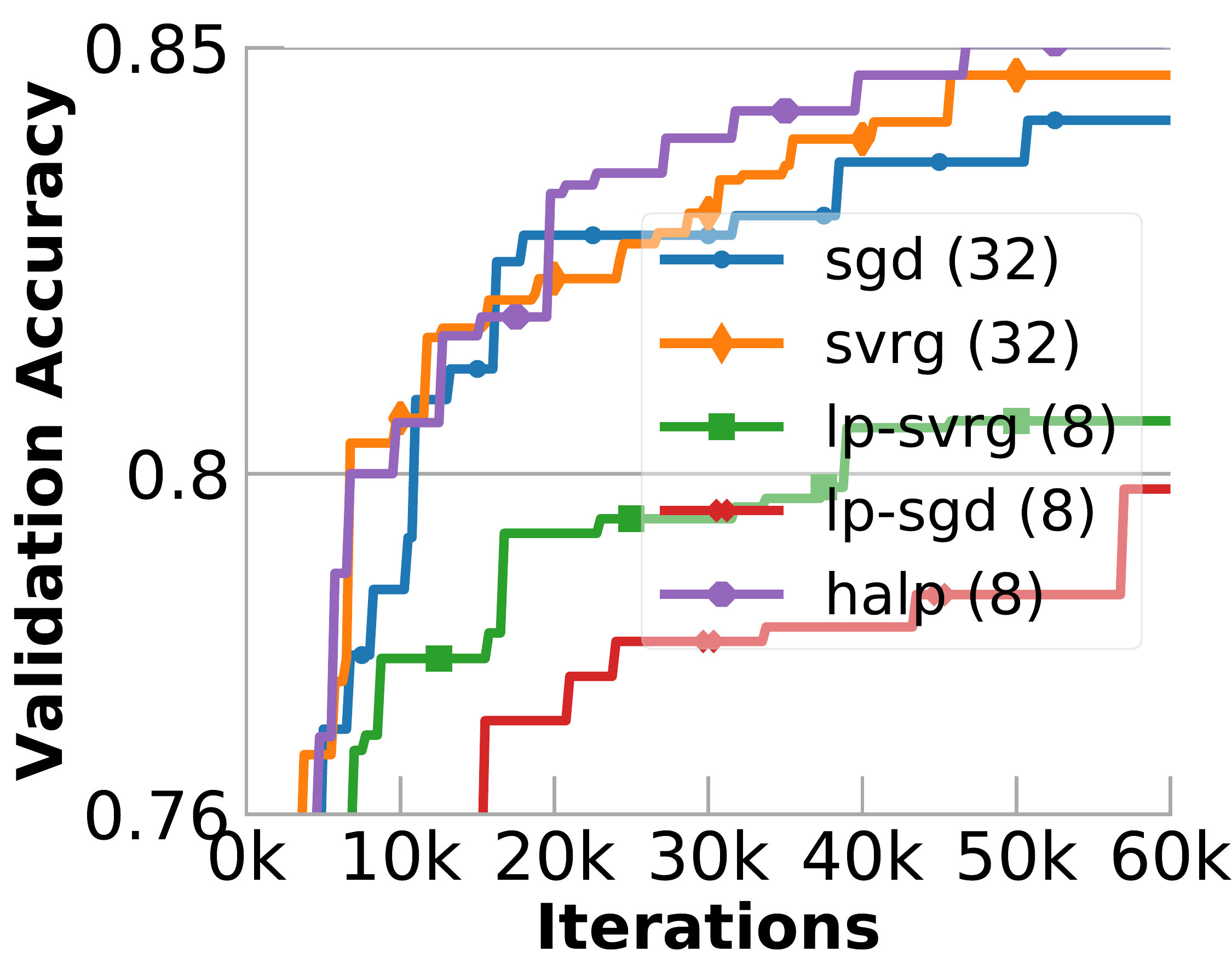}} \hfill
  \subfigure[Training loss on LSTM.]{\label{fig:train_loss_lstm}\includegraphics[width=0.24\linewidth]{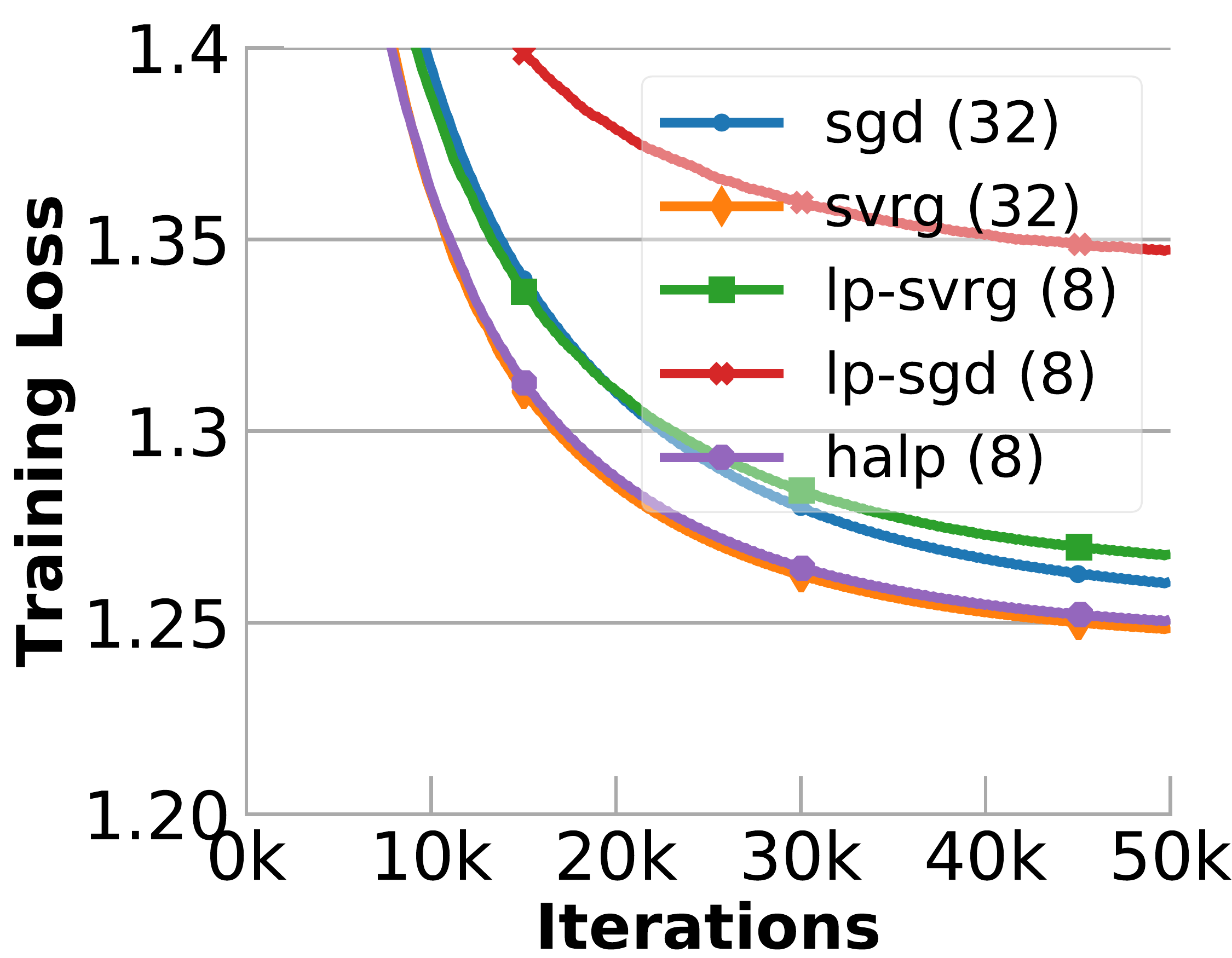}} \hfill
  \subfigure[Val. Accuracy on LSTM.]{\label{fig:val_acc_lstm}\includegraphics[width=0.24\linewidth]{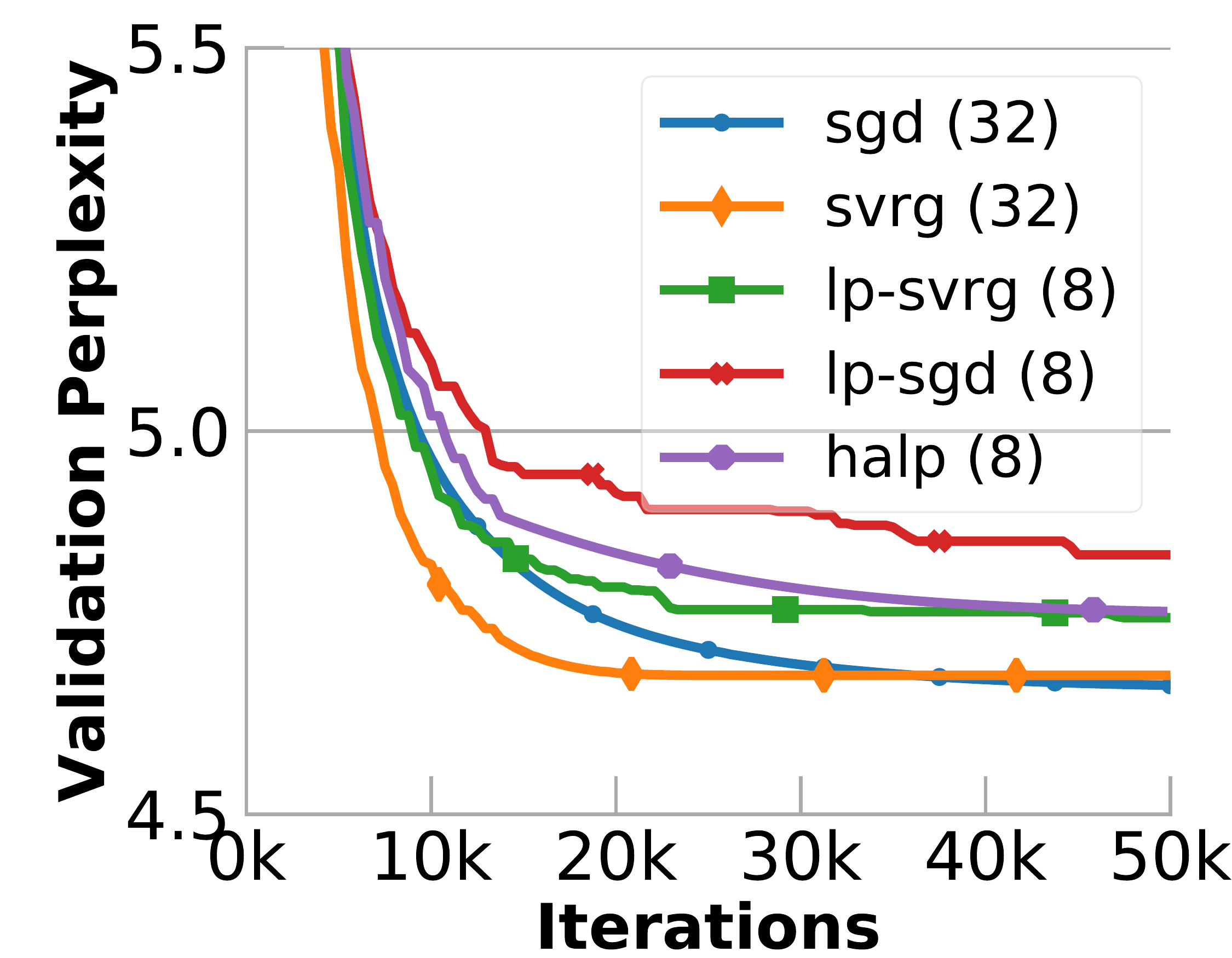}} \hfill
\end{tabular}
\caption{Training loss and validation perplexity on LSTM for character level language modeling with TinyShakespeare dataset and for ResNet for image recognition with CIFAR10 dataset. Training loss for is smoothed for visualization purposes. The CIFAR10 accuracy is monotonic because we report the best value up to each specific number of iterations, which is standard for reporting validation accuracy. 
} 
\label{fig:dl_results}
\end{figure*}

\section{Evaluation}
\label{sec:evaluation}
In this section, we empirically evaluate both LP-SVRG and \sysname{} on standard training 
applications. Our goal here is to validate (1) that these new
low-precision algorithms can lead to high-accuracy solutions and (2) that the increased 
throughput of these low-precision algorithms can lower the required end-to-end training 
time. In \Cref{sec:eval_nn} we validate
that both LP-SVRG and HALP are capable of achieving high-accuracy solutions (when 
compared to low-precision SGD) on both a convolutional neural network (CNN) and a recurrent neural networks (RNN) neural network. 
In \Cref{sec:eval_log_reg} we validate that on multi-class logistic regression,
HALP can lead to substantially higher accuracy solutions than LP-SVRG while
executing each epoch up to $\numb{4} \times$ faster 
than full-precision SVRG running on a commodity CPU.

\subsection{Deep Learning Results}
\label{sec:eval_nn}

For deep learning, we show that (1) LP-SVRG can exhibit better training losses than low-precision SGD, and (2) HALP can have even better training losses than LP-SVRG.

\begin{figure}
\centering     \hfill
\subfigure[MNIST.]{\label{fig:mnist_8_bit}\includegraphics[width=0.49\myfcwidth]{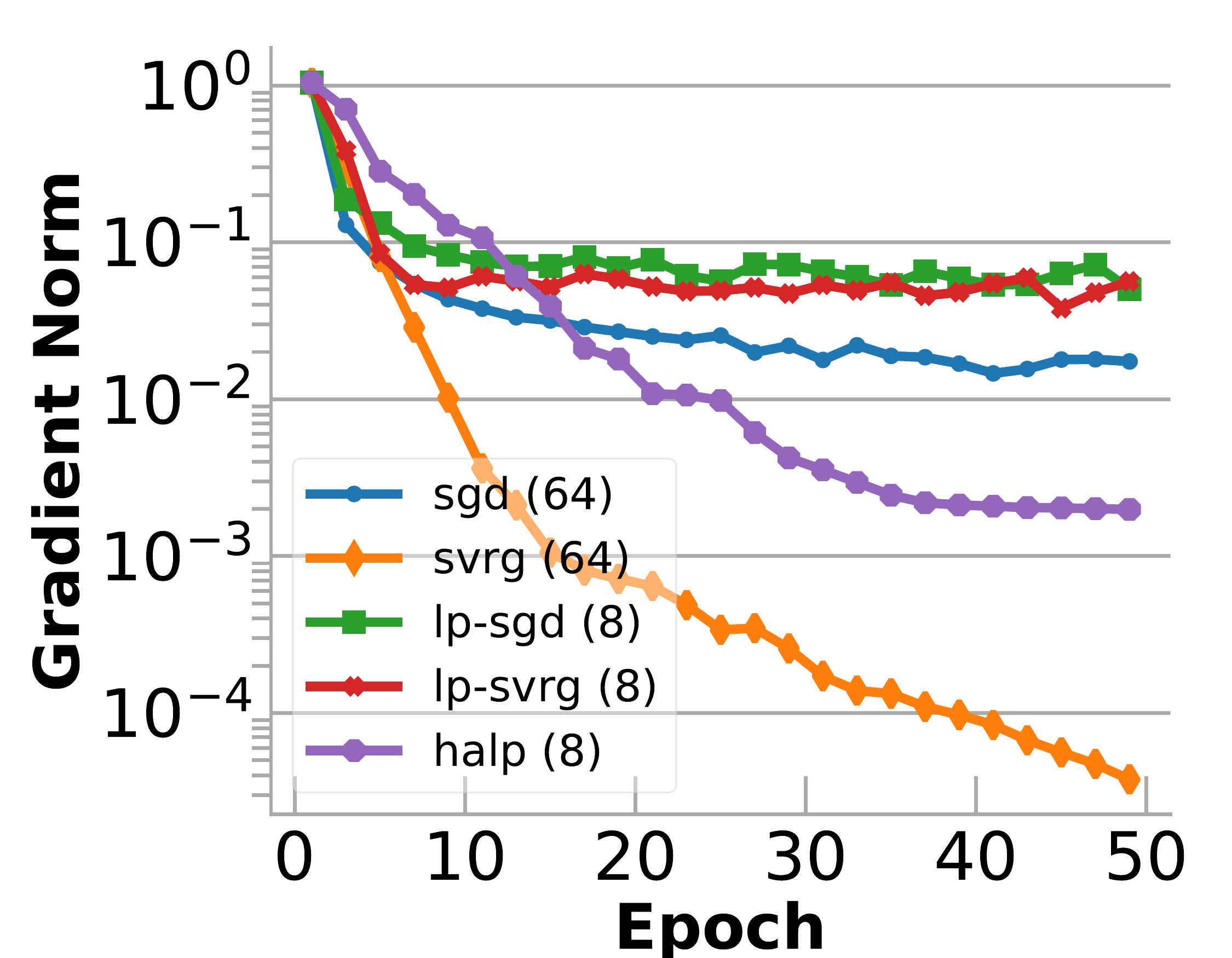}} \hfill
\subfigure[Synthetic dataset.]{\label{fig:classif_10000_8_bit}\includegraphics[width=0.49\myfcwidth]{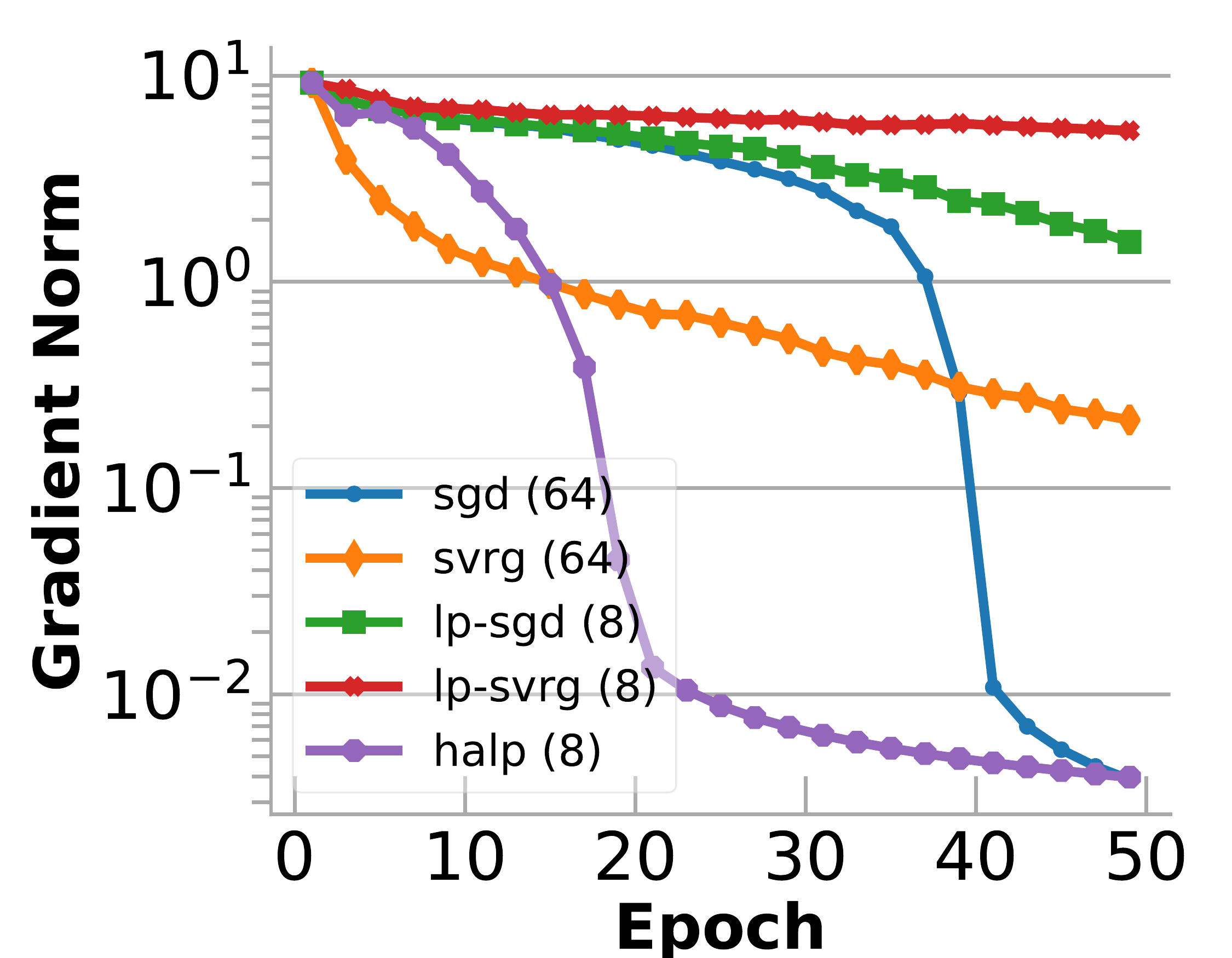}} \hfill
\vspace{-4mm}
\caption{Convergence of 8-bit low-precision algorithms on multi-class logistic regression training.}
\label{fig:halp_results}
\end{figure}

\textbf{Experimental Setup. } To demonstrate the effectiveness of LP-SVRG and HALP, 
we empirically evaluate them on both a CNN and 
a RNN. To run these experiments, we extended the 
TensorQuant~\cite{loroch2017tensorquant} toolbox to simulate the quantization 
operations from Algorithm~\ref{algLPSVRG} and Algorithm~\ref{algHALP}. In more detail, our deep learning training results run the computation at full-precision, but simulate the quantization operations during the model update.
We conduct the CNN experiment with a ResNet 
~\cite{he2016deep} architecture 
with $16, 32,$ and $64$ channels of $3\times3$ kernels in the first, 
second, and third building blocks. We train this model on the CIFAR10~\cite{krizhevsky2014cifar} image classification dataset. 
For the RNN experiment, we evaluate a character-level language modeling task on 
the TinyShakespeare dataset~\cite{karpathy2015visualizing}. 
The RNN model is a two-layer LSTM, where each layer contains with 
128 hidden units. 
In our 8-bit experiments, we uniformly set learning rate to 0.5 for the CNN experiments and to 1.0 for the RNN experiments. We sweep scale $\delta$ in grid $\{0.001, 0.002, 0.005, 0.01, 0.02, 0.05 \}$ for both the CNN and RNN experiments.
For HALP runs, we sweep $\mu$ in the grid $\{1, 2, 5, 10, 20, 50\}$.
For each algorithm, we present the training loss curve and validation metric curve with the best values found from the grid search.

\textbf{CNN Discussion. }
\Cref{fig:train_loss_cnn,fig:val_acc_cnn} show that 8-bit HALP comes close to 
the training loss of full-precision SVRG, while significantly outperforming LP-SVRG and LP-SGD.
In terms of validation accuracy, \Cref{fig:val_acc_cnn} 
shows that 8-bit LP-SVRG produces a model with improved validation 
accuracies of 0.8\% when compared to LP-SGD, and more importantly that 8-bit HALP 
reaches validation metrics that 
match the quality of full-precision SVRG. 

\textbf{LSTM Discussion. } 
In \Cref{fig:train_loss_lstm,fig:val_acc_lstm} we show that 8-bit LP-SVRG outperforms
LP-SGD in training loss and validation perplexity on the LSTM 
model. 
We show that 8-bit HALP is able to achieve a training loss that is better than LP-SVRG and closely matches the results from full-precision SVRG. 
We also observe that the validation perplexity from 8-bit HALP is worse than the one from 8-bit SVRG; this phenomenon (a lower training loss not guaranteeing better generalization) is often observed in deep learning results.

\subsection{Multi-Class Logistic Regression Results}
\label{sec:eval_log_reg}

We validate that HALP strictly outperforms LP-SVRG and LP-SGD on two logistic
regression training applications while executing each iteration up to $4\times$ 
faster than full-precision (64-bit) SVRG.

\textbf{Experimental Setup. } To test the effectiveness of the HALP algorithm, we compared 8-bit HALP to 
8-bit LP-SGD, 8-bit LP-SVRG,
64-bit SVRG, and 64-bit SGD implementations on (10 class) logistic regression
classification tasks. We implemented 8 and 16-bit versions of each algorithm in C++ 
using AVX2 intrinsics. 
We present results on two datasets, MNIST with 784 features and 60,000 samples
and a synthetic dataset with 10,000 features and 7,500 samples.
The synthetic dataset was generated
using scikit-learn's \texttt{make_classification} dataset generator.
In \Cref{sec:extended_log_reg} we present the complete experimental 
details.

\textbf{Statistical Discussion. } \Cref{fig:halp_results} shows that HALP strictly outperforms both LP-SGD and
LP-SVRG. Interestingly, \Cref{fig:mnist_8_bit} shows that HALP outperforms all 
algorithms except full-precision SVRG on the MNIST dataset. This is expected because, due to its use of bit centering, HALP has lower-magnitude quantization noise than both LP-SGD and LP-SVRG. Amazingly, \Cref{fig:classif_10000_8_bit} shows
that sometimes HALP is capable of outperforming all learning algorithms including full-precision SVRG.

\textbf{Performance Discussion. } \Cref{table:perf_numbers} shows that HALP outperforms SGD by up to
$\numb{3} \times$ and SVRG by up to $\numb{4} \times$ while remaining within 25\% of LP-SGD per epoch.
On MNIST the performance gains of HALP are less pronounced as this dataset is purely
compute bound---meaning we do not experience the memory bandwidth benefits of low-precision.
On the larger synthetic dataset the application becomes more memory bound, and
we correspondingly notice a larger performance improvement in the low-precision 
applications. 
To a certain degree these performance results are limited by the design of current
CPU architectures (see \Cref{sec:extended_log_reg}).

\begin{table}
  \begin{scriptsize}
  \begin{sc}
  \begin{center}
    \setlength{\tabcolsep}{20pt}
    \begin{tabular}{@{}rrr@{}}
      \toprule
      &                 MNIST               &Synth. 10,000 \\
      \midrule
      Best              &$0.16 \, \textrm{s}$              &$0.19 \, \textrm{s}$\\
      \midrule
      SGD               &$2.10 \times$              &$4.08 \times$\\
      SVRG              &$11.57 \times$             &$20.58 \times$\\
      SVRG+             &$2.70 \times$              &$5.30 \times$\\
      LP-SGD            &$\mathbf{1.00 \times}$     &$\mathbf{1.00 \times}$\\
      LP-SVRG           &$1.14 \times$              &$1.22 \times$\\
      HALP              &$1.13 \times$              &$1.24 \times$\\
      \bottomrule
    \end{tabular}
  \end{center}
  \end{sc}
  \end{scriptsize}
      \caption{Runtime (in seconds) per epoch of the best performing 
      optimization algorithm and relative runtime for all optimization algorithms.
      Logistic regression is run over classification datasets with 10 classes 
      and 10,000 samples. The low-precision
      algorithms (LP) are all run with 8-bit precision. SVRG is the original 
      code released by \citet{johnson2013accelerating} and SVRG+ is our implementation 
      that uses AVX2 intrinsics.}
    \label{table:perf_numbers}
\end{table}

\section{Conclusion}

In this paper we presented HALP, a new SGD variant that is able to theoretically
converge at a linear rate while using fewer bits. HALP leverages SVRG to reduce noise 
from gradient variance, and introduces \emph{bit centering} 
to reduce noise from quantization. 
To validate the effectiveness of SVRG in low-precision computation, we show 
that both HALP and low-precision SVRG converge to high-accuracy 
solutions on LSTM, CNN, and multi-class logistic regression applications 
up to $\numb{4} \times$ faster than full-precision SVRG.

\bibliographystyle{plainnat}
\bibliography{references}

\newpage
\onecolumn

\appendix

\section{Extended Evaluation}
\label{sec:extended_evlaution}

\subsection{Conditioning Evaluation}

Our results in Theorem~\ref{thmHALP} exposed a relationship between the condition number and the performance of low-precision training.
These results suggested that as the condition number is increased, there is a threshold (the threshold at which the conditions of the theorem hold) above which the performance of HALP is no longer guaranteed, and may become dramatically worse.
Here, we validate this intuition empirically.

In order to study this effect, we generated a series of linear regression problems with different condition numbers.
Each problem had as matrix of training examples $X \in \R^{64 \times 1000}$, and was generated from a singular value decomposition $X = U S V$ such that $U$ and $V$ were random orthogonal matrices, and the singular values $S$ were chosen such that for the linear regression problem, $\mu = 1$ and $L = \kappa$, for some desired condition number $\kappa$.
We then ran SVRG, 16-bit HALP, and 8-bit HALP on each dataset, using a fixed epoch length $T = 1000$, running for $K = 50$ epochs, and using grid search over a wide range of step sizes $\alpha$ (ranging from $10^{-2}$ to $10^{-10}$) and HALP parameters $\mu$ (ranging from $0.5$ to $2000$), and choosing the best parameter setting for each algorithm.
Figure~\ref{fig:halpconditioning} presents the gradient norm of the algorithms after $K = 50$ epochs.
Notice that as the condition number increases, all the algorithms perform worse (i.e. converge slower); this is a well-known effect caused by the fact that poorly conditioned problems are harder to solve.
A more interesting effect is the fact that for both 8-bit and 16-bit HALP, there is a sudden threshold where the performance of the algorithm suddenly degrades, and above which the performance is worse than SVRG.
This validates our theory, which predicts that such a thing will happen, and places a bound on the minimum number of bits that will be needed for solving problems of a particular condition number.

\begin{figure}
  \centering
  \includegraphics[width=0.4\linewidth]{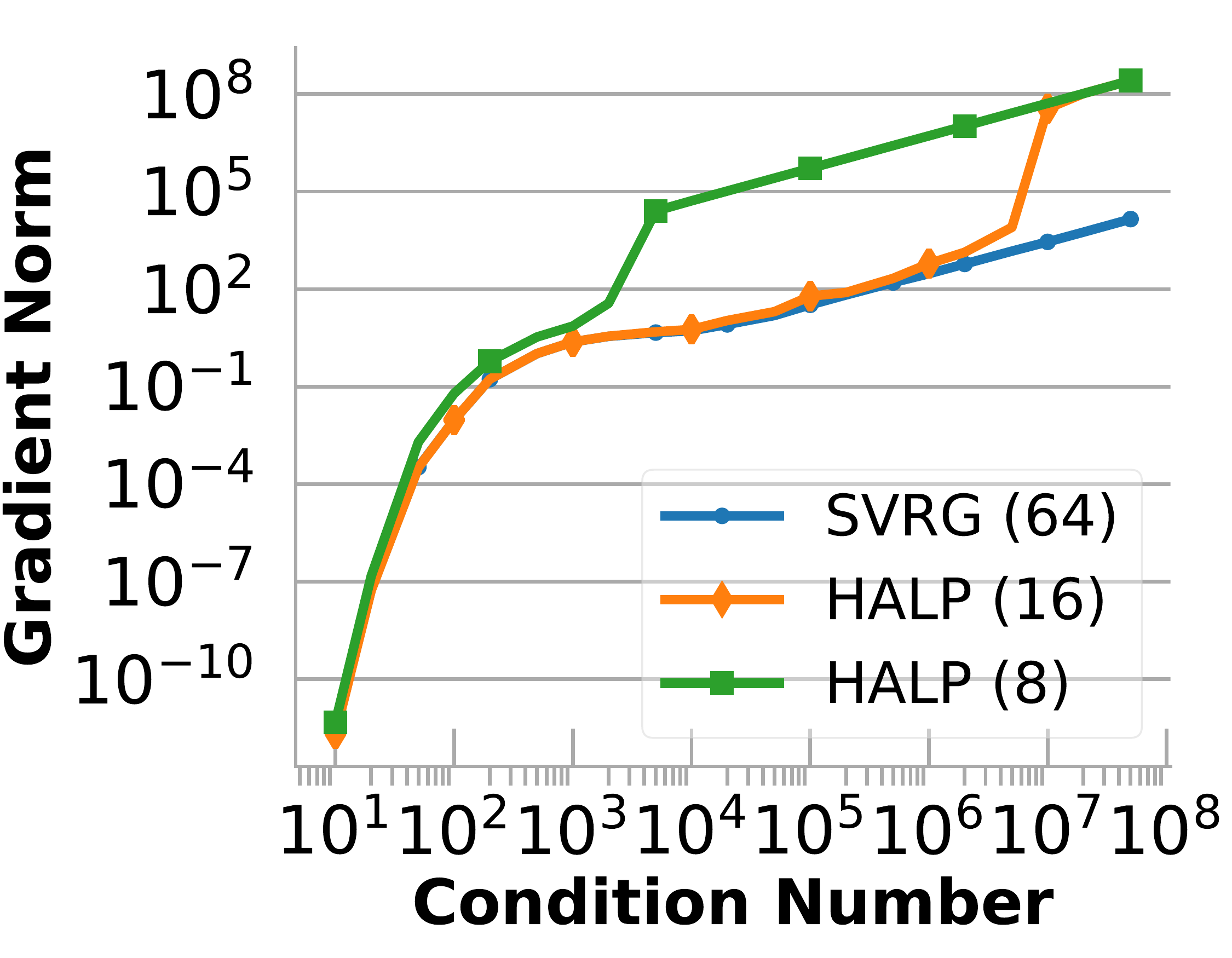}
  \caption{Additional linear regression experiment on synthetic dataset with an SVD(U,S,V) where
  U and V are random orthogonal matrices and the singular values S are such that the condition number of the resulting problem is $\kappa$. The gradient norm is measured after 50 epochs.}
  \label{fig:halpconditioning}
\end{figure}

\subsection{Extended Neural Network Evaluation}
\label{sec:extended_nn}

\paragraph{16-bit Experiments.} In \Cref{fig:dl_results_16} we show 16-bit 
experiments on the LSTM and CNN neural networks presented in \Cref{sec:eval_nn}. 
We use the same experiment protocol as with \Cref{sec:eval_nn}, except the grid for $\delta$ is
\[
  \delta \in \{0.00001, 0.00002, 0.00005, 0.0001, 0.0002, 0.0005\}.
\]
Our results for the most part follow the trends from the 8-bit results 
from \Cref{sec:eval_nn}. Still, we highlight here that 16-bit LP-SVRG
benefits from variance reduction in this low-precision training. 
As such, LP-SVRG achieves a training loss that matches the 
performance of 32-bit full-precision SVRG. We show that 16-bit 
LP-SVRG also consistently outperforms 16-bit SGD in training loss.

\begin{figure*}
\centering
\begin{tabular}{c@{\hspace{0.3em}} c@{\hspace{0.3em}} c@{\hspace{0.3em}} c}
  \subfigure[Training loss on ResNet.]{\label{fig:train_loss_cnn_16}\includegraphics[width=0.24\linewidth]{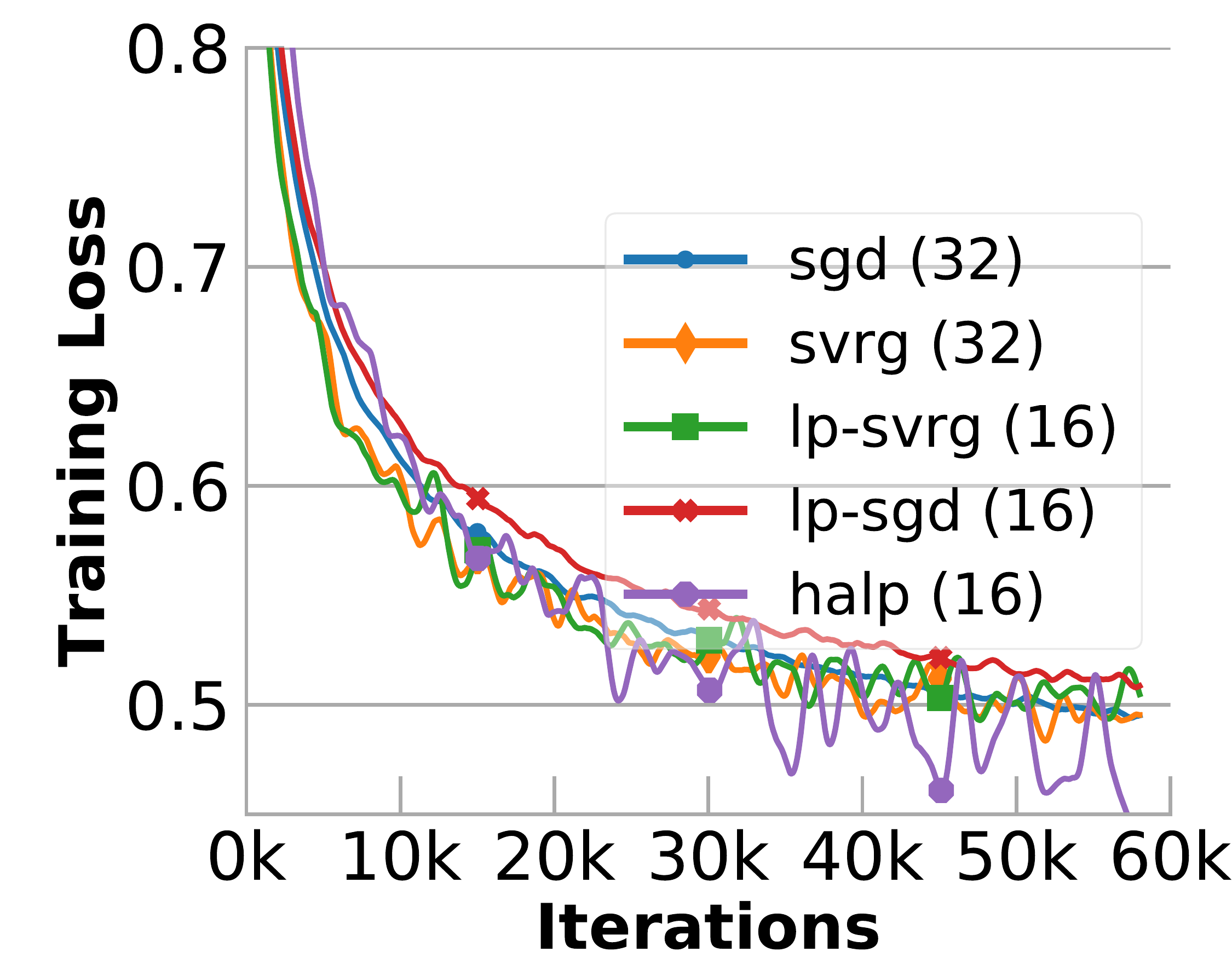}} \hfill
  \subfigure[Val. Accuracy on ResNet.]{\label{fig:val_acc_cnn_16}\includegraphics[width=0.24\linewidth]{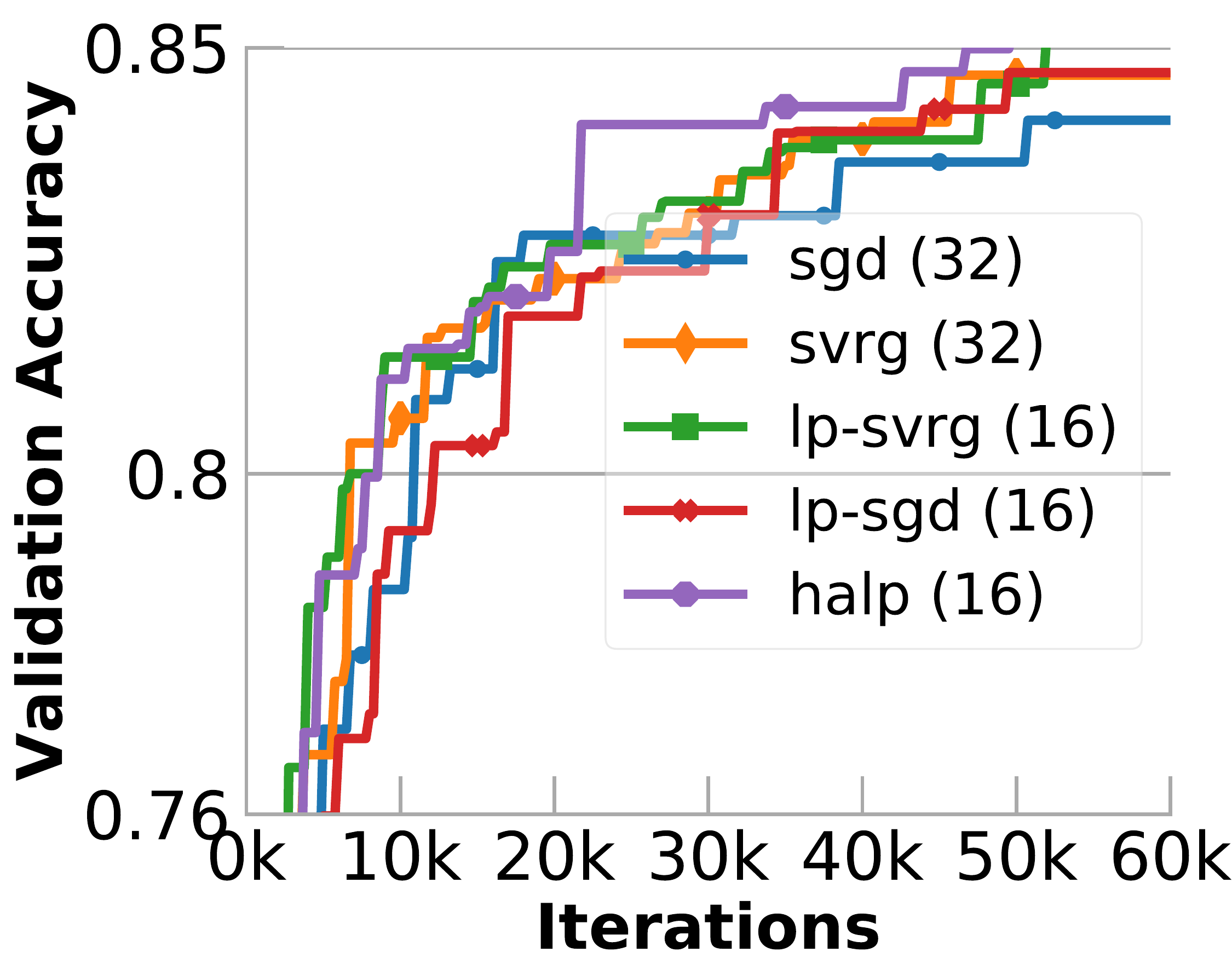}} \hfill
  \subfigure[Training loss on LSTM.]{\label{fig:train_loss_lstm_16}\includegraphics[width=0.24\linewidth]{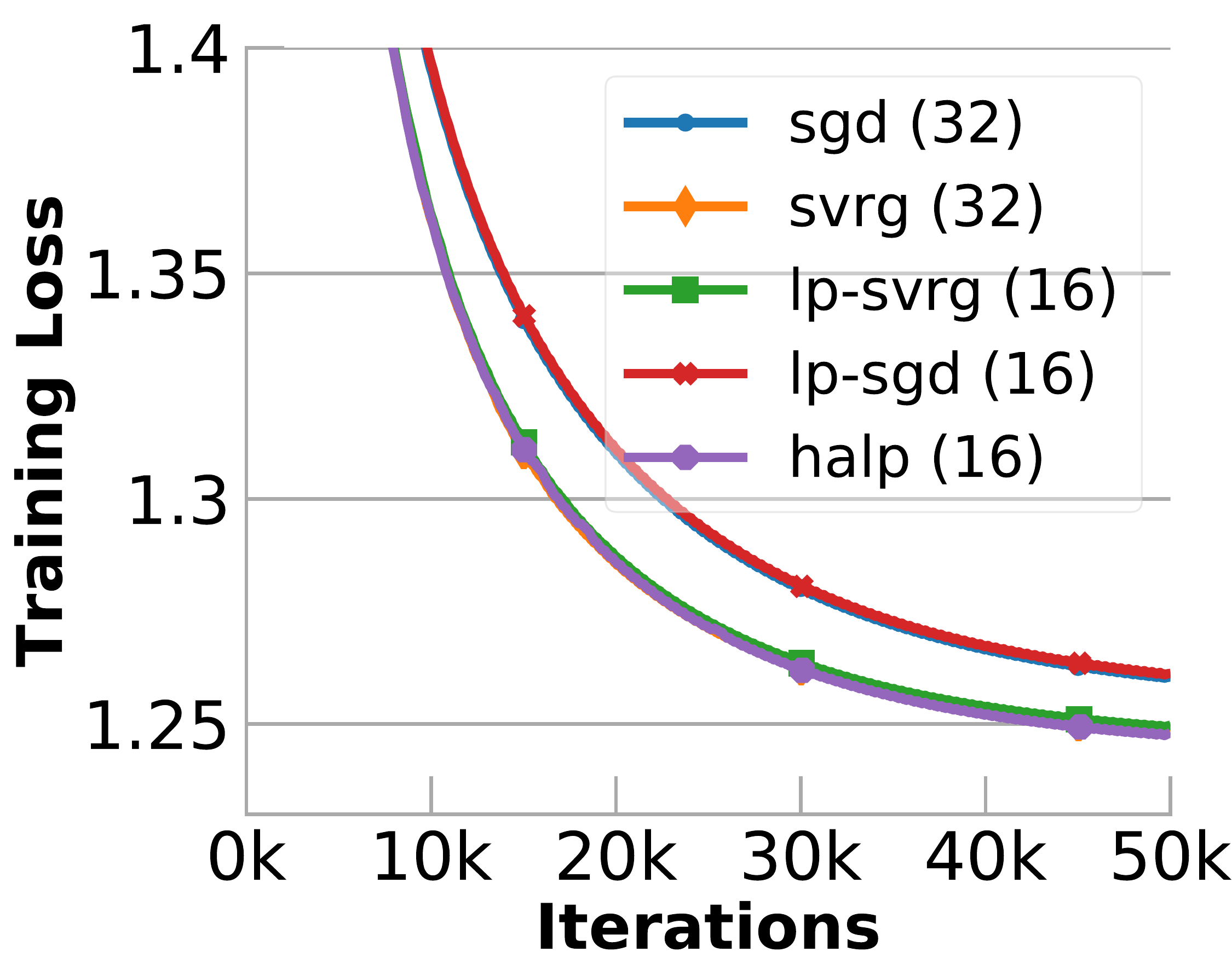}} \hfill
  \subfigure[Val. Accuracy on LSTM.]{\label{fig:val_acc_lstm_16}\includegraphics[width=0.24\linewidth]{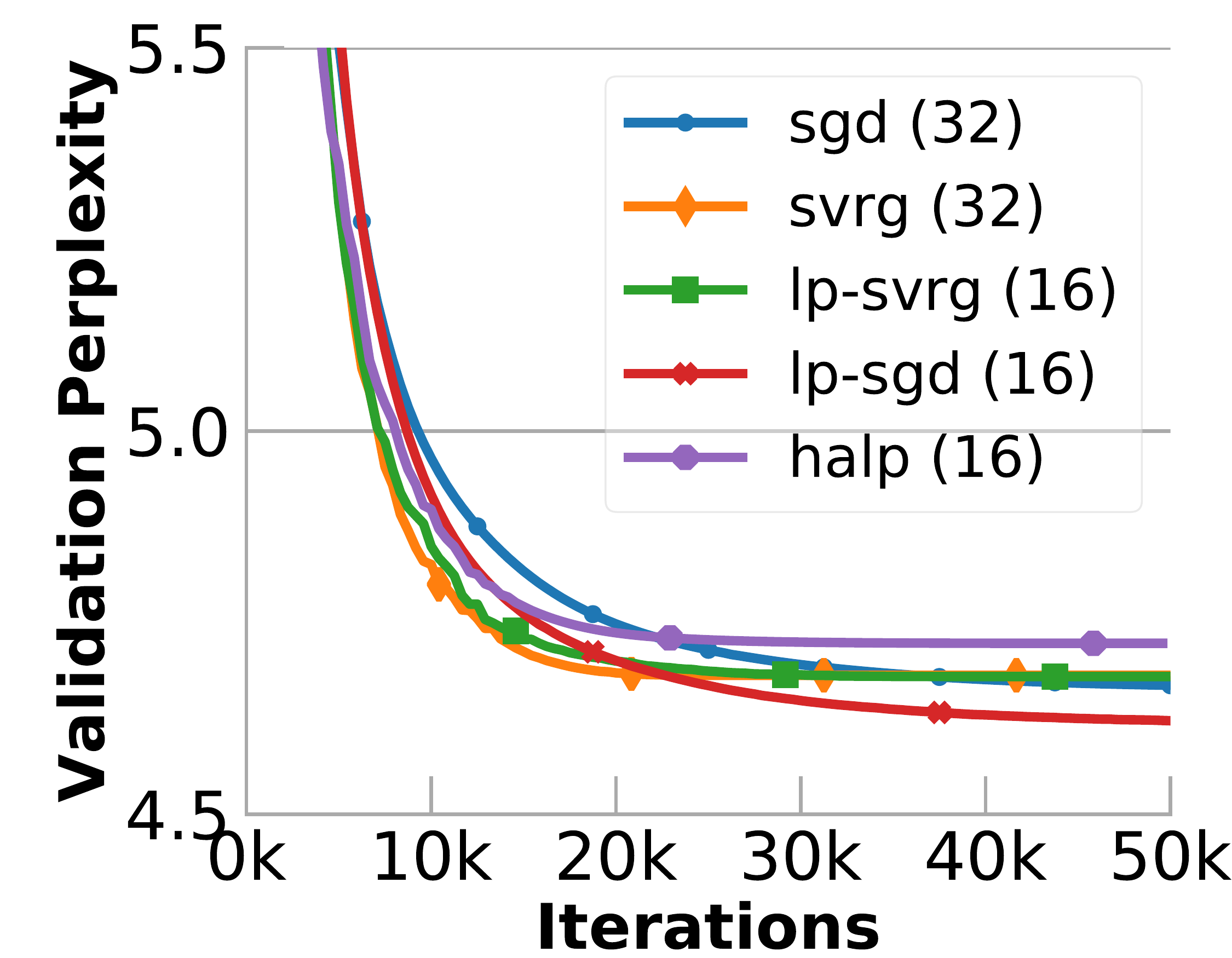}} \hfill
\end{tabular}
\caption{Training loss and validation perplexity on ResNet for image recognition with CIFAR10 dataset and for LSTM for character level language modeling with TinyShakespeare dataset. Training loss for LSTM is smoothed for visualization purposes. The CIFAR10 accuracy is the monotonic as we report the best value up to each specific number of iterations.} 
\label{fig:dl_results_16}
\end{figure*}

\subsection{Extended Multi-Class Logistic Regression Evaluation}
\label{sec:extended_log_reg}

\begin{table}
  \small
  \begin{center}
    \setlength{\tabcolsep}{20pt}
    \begin{tabular}{@{}rrrrrr@{}}
      \toprule
      hyperparameter    &SGD (64)      &SVRG (64) &LP-SGD (8) &LP-SVRG (8) &HALP (8)  \\
      \midrule
      $\alpha$          &1e-4           &1e-2        &1e-4       &1e-2      &4.5e-2\\
      L2 Reg.           &1e-4          &1e-4      &1e-4       &1e-4        &1e-4\\
      SVRG Internal     &-             &2         &-          &2           &2\\
      $\mu$             &-             &-         &-          &-           &2.5\\
      $w\_scale$         &-             &-        &2e-3       &2e-3        &-\\
      \bottomrule
    \end{tabular}
  \end{center}
      \caption{Exact hyperparameter settings for presented MNIST results.}
      \label{table:mnist_hyper}
\end{table}

\begin{table}
  \small
  \begin{center}
    \setlength{\tabcolsep}{20pt}
    \begin{tabular}{@{}rrrrrr@{}}
      \toprule
      hyperparameter    &SGD (64)      &SVRG (64) &LP-SGD (8) &LP-SVRG (8) &HALP (8)  \\
      \midrule
      $\alpha$          &7.5e-5        &1e-5      &7.5e-5     &7.5e-5      &7.5e-4\\
      L2 Reg.           &1e-4          &1e-4      &1e-4       &1e-4        &1e-4\\
      SVRG Internal     &-             &2         &-          &2           &2\\
      $\mu$             &-             &-         &-          &-           &256\\
      $w\_scale$         &-             &-         &1e-3       &1e-3        &-\\
      \bottomrule
    \end{tabular}
  \end{center}
      \caption{Exact hyperparameter settings for presented Synthetic 10,000-feature dataset results.}
      \label{table:synth_hyper}
\end{table}

\paragraph{Experimental Setup. } The synthetic dataset was generated with \texttt{n_informative=n_features} and otherwise default parameters (besides those already mentioned like \texttt{n_features}). To select hyperparameters we ran a 100-point
grid search for each training algorithm and present the  
configurations that minimized the gradient norm after training for 50 epochs.
\Cref{table:synth_hyper,table:mnist_hyper} shows the exact hyperparameters 
used for the runs present. For all SVRG-based algorithms (SVRG, LP-SVRG, and HALP) we compute
the full gradient every two epochs as done in \citet{johnson2013accelerating}. Note our definition of epoch for the experimental results means one full pass over the dataset, whereas in the algorithmic definitions, it referred to one outer loop iteration of SVRG or HALP. 
To test statistical convergence we measure the gradient norm every other epoch
because it goes to zero as the algorithm converges, and for strongly convex functions it is at most a constant factor away from other common metrics such as the objective gap and the distance to the optimum. 
To ensure robustness we run each algorithm with five different random seeds and 
present the average 
of the results. We ran all experiments on a single machine with a total of 56
cores on four Intel Xeon E7-4850 v3 CPUs and 1 TB of RAM. For the SVRG-based 
algorithms (SVRG, LP-SVRG, and HALP),
we parallelized the computation of the full gradient as this is embarrassingly 
parallel (and does not affect the statistical results).
We run these algorithms with 56 threads.  For end-to-end performance, we measure 
the wallclock time for each epoch and report the average of 5 epoch timings.

\paragraph{Performance Discussion. } The stochastic quantization is expensive, even when implemented in AVX2 using the \texttt{XORSHIFT}~\cite{marsaglia2003xorshift} pseudorandom number generator. Generally, we found we also need more instructions to operate on fixed-point types, limiting the throughput benefits we can achieve from using low precision. A common example arises with multiplication, where we need to perform an additional shift following the multiplication of two fixed-point types to maintain the same bit width. Moreover, current architectures do not support efficiently operating on types less than 8 bits, limiting our expected performance improvement to 8-bit and 16-bit fixed-point types. Still, these results are exciting and we hope 
will influence the design of future architectures
and hardware accelerators.

\paragraph{Clipping Discussion. } In the body of the paper, we explained how the effect observed in Figure~\ref{fig:classif_10000_8_bit}, where 8-bit HALP outperforms plain SVRG, can be attributed to a gradient-clipping-like effect caused by saturating arithmetic.
Here, we provide more detailed evidence of this claim.

The hypothesis that we want to validate is that the improved performance of 8-bit HALP over SVRG is caused by the saturating arithmetic: the fact that the distance the iterates can move in a single epoch is bounded by the box of representable numbers, as illustrated in Figure~\ref{figCenterScale}.
Additionally, we want to rule out the possibility that the quantization itself is causing this improvement, or that the noise caused by quantizing to a particular scale is responsible.
To investigate this hypothesis, we ran two additional experiments.
The first (which we call `Scale') runs 16-bit HALP with the same scale (the same $\delta$) as 8-bit HALP would use at every iteration.
To do this, we adjusted the value of $\mu$ so that $\mu_8 (b^{8-1} - 1) = \mu_{16} (b^{16-1} - 1)$.
The second (which we call `Clip') runs 16-bit HALP with the same range of representable numbers (the same $\mu$) as 8-bit HALP.
If our hypothesis is correct, we would expect the second 16-bit HALP run, which has the same gradient-clipping-like effect as 8-bit HALP, to mimic its performance and outperform SVRG.
We would also expect the first 16-bit HALP run to be closer to SVRG's performance, since it has a much larger range of representable numbers and so will not exhibit the same gradient-clipping-like effect.

Figure~\ref{fig:additionalclipping} presents the results of our experiment.
Our hypothesis was validated, and in fact the 16-bit HALP runs are nearly indistinguishable from the trajectories we hypothesized they would take.
This strongly suggests that the gradient-clipping-like effect \emph{is indeed responsible} for the improved performance of 8-bit HALP over SVRG in Figure~\ref{fig:classif_10000_8_bit}.

\begin{figure}
  \centering
  \includegraphics[width=0.4\linewidth]{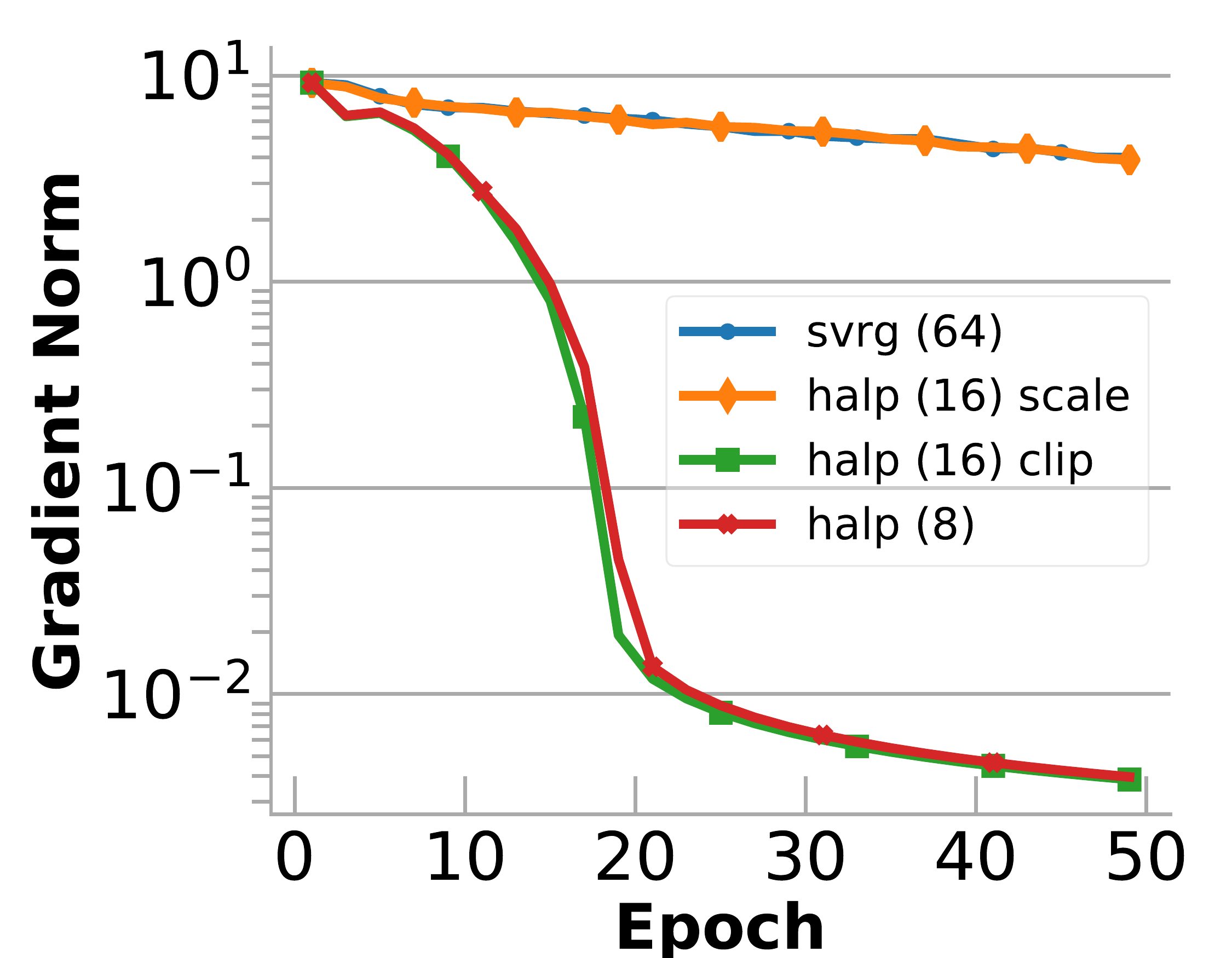}
  \caption{Additional experiments on synthetic dataset with 10,000 features. `Scale'
  refers to the 16-bit HALP run with an adjusted $mu$ value of 1.
  This run has the same quantization scale as the 8-bit HALP algorithm.
  `Clip' refers to a 16-bit HALP run with the same $mu$ value as the 8-bit HALP (256).
  This run has the same range of representable values as the 8-bit HALP algorithm, and so will clip the gradients at the same level.}
  \label{fig:additionalclipping}
\end{figure}

\section{Proofs}

Before we prove the main theorems presented in the paper, we will prove the following lemmas, which will be useful later.

\begin{lemma}
  \label{lemmaQuantization}
  Under the above conditions where we quantize using the low-precision representation $(\delta, b)$, for any $w$,
  \[
    \Exv{ \norm{ Q_{(\delta,b)}(w) - w^* }^2 }
    \le
    \norm{ w - w^* }^2
    +
    \frac{d \delta^2}{4}.
  \]
\end{lemma}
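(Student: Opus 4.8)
The plan is to reduce the statement to a one-dimensional, coordinate-wise inequality. Since $Q_{(\delta,b)}$ quantizes each component independently and $\norm{v}^2 = \sum_{j=1}^d (v)_j^2$, linearity of expectation gives
\[
  \Exv{ \norm{ Q_{(\delta,b)}(w) - w^* }^2 }
  =
  \sum_{j=1}^d \Exv{ \left( Q_{(\delta,b)}\big((w)_j\big) - (w^*)_j \right)^2 }.
\]
So it suffices to prove, for each coordinate $j$, the bound $\Exv{ ( Q_{(\delta,b)}((w)_j) - (w^*)_j )^2 } \le ((w)_j - (w^*)_j)^2 + \delta^2/4$ and then sum over the $d$ coordinates.

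For a fixed coordinate I would split into two cases according to whether $(w)_j$ lies in the interior of the one-dimensional domain of $(\delta,b)$. If it does, then $Q$ is unbiased at $(w)_j$, so the bias-variance decomposition gives $\Exv{ ( Q((w)_j) - (w^*)_j )^2 } = ((w)_j - (w^*)_j)^2 + \Exv{ (Q((w)_j) - (w)_j)^2 }$; it then remains to bound the quantization variance term. Here $Q((w)_j)$ is supported on the two nearest representable values, which straddle $(w)_j$ and are exactly $\delta$ apart, so writing $p$ and $q = \delta - p$ for the distances from $(w)_j$ to the lower and upper representable point, the unbiased two-point rule puts mass $q/\delta$ on the lower and $p/\delta$ on the upper point, and a short computation gives variance $pq \le (\tfrac{p+q}{2})^2 = \delta^2/4$. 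That yields the per-coordinate bound in the interior case. If instead $(w)_j$ lies outside the interior, then $Q((w)_j)$ is deterministically the nearest representable value, i.e. the projection of $(w)_j$ onto the interval $[-\delta 2^{b-1},\, \delta(2^{b-1}-1)]$; this projection is $1$-Lipschitz and, by the representability assumption~(\ref{eqn:representationbound}), fixes $(w^*)_j$, so $| Q((w)_j) - (w^*)_j | \le | (w)_j - (w^*)_j |$, which is even stronger than what we need.

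I do not expect a genuine obstacle here: the content is the elementary fact that unbiased rounding between two lattice points spaced $\delta$ apart has variance at most $\delta^2/4$. The only point requiring care is the boundary case, where unbiasedness fails; there the argument instead relies on the nonexpansiveness of projection onto the box of representable values together with the fact that $w^*$ lies inside that box. Combining the two cases coordinate-by-coordinate and summing over $j$ completes the proof.
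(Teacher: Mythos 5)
Your proposal is correct and follows essentially the same route as the paper's proof: reduce to one dimension by summing over coordinates, use the bias--variance decomposition plus the bound $pq \le \delta^2/4$ for the two-point unbiased rounding in the interior case, and use nonexpansiveness of the clamp to the representable box (which contains $w^*$ by assumption~(\ref{eqn:representationbound})) in the boundary case. No gaps.
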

\begin{proof}[Proof of Lemma~\ref{lemmaQuantization}]
  First, observe that this entire inequality separates additively along dimensions.
  Therefore, it suffices to prove it just for the case of $d = 1$.

  To prove it for $d = 1$, we can consider two cases separately. First, if $w$ is within the range representable by $(\delta, b)$, then $\Exv{Q_{(\delta,b)}(w)} = w$. In this case,
  \begin{dmath*}
    \Exv{ \norm{ Q_{(\delta,b)}(w) - w^* }^2 }
    =
    \Exv{ ( Q_{(\delta,b)}(w) - w^* )^2 }
    =
    \Exv{ ( (Q_{(\delta,b)}(w) - w) - (w - w^*) )^2 }
    =
    \Exv{ 
      (Q_{(\delta,b)}(w) - w)^2
      -
      2 (Q_{(\delta,b)}(w) - w) (w - w^*)
      +
      (w - w^*)^2
    }
    =
    \Exv{ (Q_{(\delta,b)}(w) - w)^2 }
    -
    2 (w - w) (w - w^*)
    +
    (w - w^*)^2
    =
    (w - w^*)^2
    +
    \Exv{ (Q_{(\delta,b)}(w) - w)^2 }.
  \end{dmath*}
  Since $w$ is within the representable range, it will either be rounded up or down at random.
  Let $z$ be the rounded-down quantization of $w$.
  Then $Q_{(\delta,b)}(w)$ will round to $z + \delta$ (the rounded-up quantization of $w$) with probability $\frac{w - z}{\delta}$,
  and it will round to $z$ with probability $\frac{z + \delta - w}{\delta}$.
  This quantization is unbiased because
  \[
    \Exv{ Q_{(\delta,b)}(w) }
    =
    \frac{w - z}{\delta} (z + \delta)
    +
    \frac{z + \delta - w}{\delta} z
    =
    \frac{wz - z^2 + w \delta - z \delta}{\delta}
    +
    \frac{z^2 + z \delta - wz}{\delta}
    =
    w.
  \]
  Thus, its variance will be
  \begin{dmath*}
    \Exv{ (Q_{(\delta,b)}(w) - w)^2 }
    =
    \frac{w - z}{\delta} (z + \delta - w)^2
    +
    \frac{z + \delta - w}{\delta} (z - w)^2
    =
    (w - z) (z + \delta - w) \left(
      \frac{z + \delta - w}{\delta}
      +
      \frac{w - z}{\delta}
    \right)
    =
    (w - z) (z + \delta - w)
    \le
    \frac{\delta^2}{4}.
  \end{dmath*}
  It follows that in the $d = 1$ case, when $w$ is on the interior of the representable region, 
  \begin{dmath*}
    \Exv{ \norm{ Q_{(\delta,b)}(w) - w^* }^2 }
    \le
    (w - w^*)^2
    +
    \frac{\delta^2}{4}.
  \end{dmath*}
  In the other case, when $w$ is on the exterior of the representable region, the quantization function $Q_{(\delta,b)}$ just maps it to the nearest representable value.
  Since $w^*$ is in the interior of the representable region, this operation will make $w$ closer to $w^*$.
  Thus,
  \[
    \norm{ Q_{(\delta,b)}(w) - w^* }^2 \le \norm{ w - w^* }^2,
  \]
  and so it will certainly be the case that
  \begin{dmath*}
    \Exv{ \norm{ Q_{(\delta,b)}(w) - w^* }^2 }
    \le
    (w - w^*)^2
    +
    \frac{\delta^2}{4}.
  \end{dmath*}
  We have now proved this inequality for all values of $w$, when $d = 1$.
  The inequality now follows in full generality by summing up over dimensions.
\end{proof}

For completeness, we also re-state the proof of following lemma, which was presented as equation (8) in \citet{johnson2013accelerating}.

\begin{lemma}
  \label{lemmaSVRG8}
  Under the standard condition of Lipschitz continuity, if $i$ is sampled uniformly at random from $\{1, \dots, N\}$, then for any $w$,
  \[
    \Exv{ \norm{ \nabla f_i(w) - \nabla f_i(w^*) }^2 }
    \le
    2L \left( f(w) - f(w^*) \right).
  \]
\end{lemma}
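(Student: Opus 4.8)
The plan is to prove a stronger per-component inequality and then average over $i$. Fix any index $i$ and consider the auxiliary function $h_i(u) = f_i(u) - (\nabla f_i(w^*))^T u$. Since each $f_i$ is convex with an $L$-Lipschitz gradient, $h_i$ is also convex and $L$-smooth, and its gradient is $\nabla h_i(u) = \nabla f_i(u) - \nabla f_i(w^*)$, which vanishes at $u = w^*$; hence $w^*$ is a global minimizer of $h_i$. The first step is therefore to establish the classical consequence of smoothness for convex functions with a minimizer: for any $L$-smooth convex $h$ with minimizer $u^*$ and any $u$, $h(u) - h(u^*) \ge \frac{1}{2L} \norm{\nabla h(u)}^2$.

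To get that inequality, I would start from the smoothness upper bound $h(v) \le h(u) + (\nabla h(u))^T (v - u) + \frac{L}{2} \norm{v - u}^2$, substitute $v = u - \frac{1}{L} \nabla h(u)$ to obtain $h(v) \le h(u) - \frac{1}{2L} \norm{\nabla h(u)}^2$, and then bound the left side below by $h(u^*)$ since $u^*$ is the minimizer. Applying this with $h = h_i$ and $u = w$ gives $\frac{1}{2L} \norm{\nabla f_i(w) - \nabla f_i(w^*)}^2 \le h_i(w) - h_i(w^*) = f_i(w) - f_i(w^*) - (\nabla f_i(w^*))^T (w - w^*)$.

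The final step is to take the expectation over $i$ drawn uniformly from $\{1, \dots, N\}$, which turns the left side into $\Exv{ \norm{\nabla f_i(w) - \nabla f_i(w^*)}^2 }$ and the right side into $2L \bigl( f(w) - f(w^*) - (\nabla f(w^*))^T (w - w^*) \bigr)$ by linearity, using $f = \frac{1}{N} \sum_i f_i$. Because $w^*$ is an unconstrained global minimizer of $f$ we have $\nabla f(w^*) = 0$, so the inner-product term drops out and we are left with exactly the claimed bound. The only subtle point — and the step to watch — is that the smoothness-to-gradient-norm inequality genuinely needs convexity of the individual $f_i$ (not merely of $f$); this is the standard assumption under which equation (8) of \citet{johnson2013accelerating} holds, and it is what lets us conclude a minimizer-anchored lower bound on $h_i$ rather than just a local descent statement.
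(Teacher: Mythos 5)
Your proof is correct and follows essentially the same route as the paper: your auxiliary function $h_i(u) = f_i(u) - (\nabla f_i(w^*))^T u$ differs from the paper's $g_i$ only by an additive constant, and both arguments apply the $L$-smoothness descent inequality at step size $1/L$ (the paper phrases this as a minimization over $\eta$), use that $w^*$ minimizes the auxiliary function, and average over $i$ with $\nabla f(w^*) = 0$. Your explicit remark that convexity of each individual $f_i$ is needed here is a point the paper leaves implicit.
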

\begin{proof}[Proof of Lemma~\ref{lemmaSVRG8}]
  For any $i$, define
  \[
    g_i(w) = f_i(w) - f_i(w^*) - (w - w^*)^T \nabla f_i(w^*).
  \]
  Clearly, if $i$ is sampled randomly as in the lemma statement, $\Exv{g_i(w)} = f(w)$.
  But also, $w^*$ must be the minimizer of $g_i$, so for any $w$
  \begin{dmath*}
    g_i(w^*)
    \le
    \min_{\eta} g_i(w - \eta \nabla g_i(w))
    \le
    \min_{\eta} \left( g_i(w) - \eta \norm{ \nabla g_i(w) }^2 + \frac{\eta^2 L}{2} \norm{\nabla g_i(w)}^2 \right)
    =
    g_i(w) - \frac{1}{2L} \norm{\nabla g_i(w)}^2.
  \end{dmath*}
  where the second inequality follows from the Lipschitz continuity property.
  Re-writing this in terms of $f_i$ and averaging over all the $i$ now proves the lemma statement.
\end{proof}

Now we are ready to prove Theorem~\ref{thmLPSVRG}.
Our proof of this theorem follows the structure of the proof of the original SVRG convergence result in \citet{johnson2013accelerating}.

\begin{proof}[Proof of Theorem~\ref{thmLPSVRG}]
  We start by looking at the expected distance-squared to the optimum.
  \begin{dmath*}
    \Exv{ \norm{ w_{k,t} - w^* }^2 }
    =
    \Exv{ \norm{ Q_{(\delta, b)}(u_{k,t}) - w^* }^2 }.
  \end{dmath*}
  By Lemma~\ref{lemmaQuantization}, this can be bounded from above by
  \begin{dmath*}
    \Exv{ \norm{ w_{k,t} - w^* }^2 }
    \le
    \Exv{ \norm{ u_{k,t} - w^* }^2 } + \frac{d \delta^2}{4}.
  \end{dmath*}
  Applying the recursive definition of $u_{k,t}$ from the algorithm statement produces
  \begin{dmath*}
    \Exv{ \norm{ w_{k,t} - w^* }^2 }
    \le
    \Exv{ \norm{ w_{k,t-1} - w^* - \alpha \left(\nabla f_i(w_{k,t-1}) - \nabla f_i(\tilde w_k) + \tilde g_k \right) }^2 } + \frac{d \delta^2}{4}
    =
    \Exv{ 
      \norm{ w_{k,t-1} - w^* }^2
      -
      2 \alpha (w_{k,t-1} - w^*)^T \left(\nabla f_i(w_{k,t-1}) - \nabla f_i(\tilde w_k) + \tilde g_k \right)
      +
      \alpha^2 \norm{ \nabla f_i(w_{k,t-1}) - \nabla f_i(\tilde w_k) + \tilde g_k }^2
    } 
    +
    \frac{d \delta^2}{4}
    =
    \Exv{ 
      \norm{ w_{k,t-1} - w^* }^2
      -
      2 \alpha (w_{k,t-1} - w^*)^T \nabla f(w_{k,t-1})
      +
      \alpha^2 \norm{ \nabla f_i(w_{k,t-1}) - \nabla f_i(\tilde w_k) + \tilde g_k }^2
    } 
    +
    \frac{d \delta^2}{4}
    \le
    \Exv{ 
      \norm{ w_{k,t-1} - w^* }^2
      -
      2 \alpha (f(w_{k,t-1}) - f(w^*))
      +
      \alpha^2 \norm{ \nabla f_i(w_{k,t-1}) - \nabla f_i(\tilde w_k) + \tilde g_k }^2
    } 
    +
    \frac{d \delta^2}{4},
  \end{dmath*}
  where this last inequality follows from convexity of the function $f$.
  This second-order term can be further bounded by
  \begin{dmath*}
    \Exv{ \norm{ \nabla f_i(w_{k,t-1}) - \nabla f_i(\tilde w_k) + \tilde g_k }^2 }
    =
    \Exv{ \norm{ \nabla f_i(w_{k,t-1}) - \nabla f_i(w^*) - \left( \nabla f_i(\tilde w_k) - \nabla f_i(w^*) - \tilde g_k \right) }^2 }
    \le
    \Exv{ 2 \norm{ \nabla f_i(w_{k,t-1}) - \nabla f_i(w^*) }^2 + 2 \norm{ \nabla f_i(\tilde w_k) - \nabla f_i(w^*) - \tilde g_k }^2 }
    =
    \Exv{ 2 \norm{ \nabla f_i(w_{k,t-1}) - \nabla f_i(w^*) }^2 + 2 \norm{ \nabla f_i(\tilde w_k) - \nabla f_i(w^*) - \Exv[j \sim \mathrm{Unif}(1,\ldots,N)]{ \nabla f_j(\tilde w_k) - \nabla f_j(w^*) } }^2 }
    \le
    \Exv{ 2 \norm{ \nabla f_i(w_{k,t-1}) - \nabla f_i(w^*) }^2 + 2 \norm{ \nabla f_i(\tilde w_k) - \nabla f_i(w^*) }^2 }
  \end{dmath*}
  where the first inequality holds because $\norm{x + y}^2 \le 2 \norm{x}^2 + 2 \norm{y}^2$ and the second holds because the variance is always upper bounded by the second moment.
  We can now apply Lemma~\ref{lemmaSVRG8} to this last expression, which produces
  \begin{dmath*}
    \Exv{ \norm{ \nabla f_i(w_{k,t-1}) - \nabla f_i(\tilde w_k) + \tilde g_k }^2 }
    \le
    \Exv{ 4 L ( f(w_{k,t-1}) - f(w^*) ) + 4L ( f(\tilde w_k) - f(w^*) ) }.
  \end{dmath*}
  Substituting this into the expression above,
  \begin{dmath*}
    \Exv{ \norm{ w_{k,t} - w^* }^2 }
    \le
    \Exv{ 
      \norm{ w_{k,t-1} - w^* }^2
      -
      2 \alpha (f(w_{k,t-1}) - f(w^*))
      +
      4 L \alpha^2 \left( ( f(w_{k,t-1}) - f(w^*) ) + ( f(\tilde w_k) - f(w^*) ) \right)
    }
    +
    \frac{d \delta^2}{4}
    =
    \Exv{\norm{ w_{k,t-1} - w^* }^2}
    -
    2 \alpha (1 - 2 L \alpha) \Exv{ f(w_{k,t-1}) - f(w^*) }
    +
    4 L \alpha^2 \Exv{ f(\tilde w_k) - f(w^*) }
    +
    \frac{d \delta^2}{4}.
  \end{dmath*}
  Summing this up across all $T$ iterations of an epoch produces
  \begin{dmath*}
    \sum_{t=1}^T \Exv{ \norm{ w_{k,t} - w^* }^2 }
    \le
    \sum_{t=1}^T \Exv{\norm{ w_{k,t-1} - w^* }^2}
    -
    2 \alpha (1 - 2 L \alpha) \sum_{t=1}^T \Exv{ f(w_{k,t-1}) - f(w^*) }
    +
    4 L \alpha^2 T \Exv{ f(\tilde w_k) - f(w^*) }
    +
    \frac{d \delta^2 T}{4}.
  \end{dmath*}
  Now, canceling the terms from the first two sums, and noticing that $w_{k,0} = \tilde w_k$,
  \begin{dmath*}
    \Exv{ \norm{ w_{k,T} - w^* }^2 }
    \le
    \Exv{ \norm{ \tilde w_k - w^* }^2}
    -
    2 \alpha (1 - 2 L \alpha) \sum_{t=1}^T \Exv{ f(w_{k,t-1}) - f(w^*) }
    +
    4 L \alpha^2 T \Exv{ f(\tilde w_k) - f(w^*) }
    +
    \frac{d \delta^2 T}{4}.
  \end{dmath*}
  If we use option II to assign the next outer iterate, then
  \[
    \Exv{ f(\tilde w_{k+1}) - f(w^*) }
    =
    \frac{1}{T} \sum_{t=1}^T \Exv{ f(w_{k,t-1}) - f(w^*) },
  \]
  and so
  \begin{dmath*}
    \Exv{ \norm{ w_{k,T} - w^* }^2 }
    \le
    \Exv{ \norm{ \tilde w_k - w^* }^2}
    -
    2 \alpha (1 - 2 L \alpha) T \Exv{ f(\tilde w_{k+1}) - f(w^*) }
    +
    4 L \alpha^2 T \Exv{ f(\tilde w_k) - f(w^*) }
    +
    \frac{d \delta^2 T}{4}.
  \end{dmath*}
  As a consequence of the strong convexity property,
  \[
    \frac{\mu}{2} \norm{ \tilde w_k - w^* }^2 \le f(\tilde w_k) - f(w^*),
  \]
  so
  \begin{dmath*}
    2 \alpha (1 - 2 L \alpha) T \Exv{ f(\tilde w_{k+1}) - f(w^*) }
    \le
    \Exv{ \norm{ w_{k,T} - w^* }^2 }
    +
    2 \alpha (1 - 2 L \alpha) T \Exv{ f(\tilde w_{k+1}) - f(w^*) }
    \le
    \Exv{ \norm{ \tilde w_k - w^* }^2}
    +
    4 L \alpha^2 T \Exv{ f(\tilde w_k) - f(w^*) }
    +
    \frac{d \delta^2 T}{4}
    \le
    \frac{2}{\mu} \Exv{ f(\tilde w_k) - f(w^*) }
    +
    4 L \alpha^2 T \Exv{ f(\tilde w_k) - f(w^*) }
    +
    \frac{d \delta^2 T}{4}
    =
    \left( \frac{2}{\mu} + 4 L \alpha^2 T \right) \Exv{ f(\tilde w_k) - f(w^*) }
    +
    \frac{d \delta^2 T}{4},
  \end{dmath*}
  and dividing to isolate the left side,
  \begin{dmath*}
    \Exv{ f(\tilde w_{k+1}) - f(w^*) }
    \le
    \frac{ \frac{2}{\mu} + 4 L \alpha^2 T }{2 \alpha (1 - 2 L \alpha) T } \Exv{ f(\tilde w_k) - f(w^*) }
    +
    \frac{d \delta^2 T}{8 \alpha (1 - 2 L \alpha) T }
    \le
    \left( \frac{1}{\alpha \mu (1 - 2 L \alpha) T} + \frac{2 L \alpha}{1 - 2 L \alpha} \right) \Exv{ f(\tilde w_k) - f(w^*) }
    +
    \frac{d \delta^2}{8 \alpha (1 - 2 L \alpha) }.
  \end{dmath*}
  This is the same as the analogous expression for SVRG, except for the additional term that is a function of $\delta$.
  Now, suppose we want to have an expected contraction factor of $\gamma$ each epoch.
  That is, we want
  \[
    \gamma = \frac{1}{\alpha \mu (1 - 2 L \alpha) T} + \frac{2 L \alpha}{1 - 2 L \alpha}.
  \]
  This is equivalent to having
  \[
    \alpha (1 - 2 L \alpha) \gamma = \frac{1}{\mu T} + 2 L \alpha^2,
  \]
  which can be further reduced to
  \[
    0 = \frac{1}{\mu T} - \alpha \gamma + 2 L (1 + \gamma) \alpha^2.
  \]
  This equation only has solutions when the discriminant is non-negative, that is, when
  \[
    0 \le \gamma^2 - 4 \cdot \frac{1}{\mu T} \cdot 2 L (1 + \gamma).
  \]
  The minimal value of $T$ for which this will be able to hold will be when it holds with equality, or when
  \[
    T = \frac{8 L (1 + \gamma)}{\mu \gamma^2} = \frac{8 \kappa (1 + \gamma)}{\gamma^2}.
  \]
  If we choose this $T$, then the solution to the above quadratic equation is, by the quadratic formula, to set $\alpha$ such that
  \[
    \alpha = \frac{\gamma}{2 \cdot 2 L (1 + \gamma)} = \frac{\gamma}{4 L (1 + \gamma)}.
  \]
  We can see that these are the settings of $T$ and $\alpha$ prescribed in the theorem statement.
  With these settings of $T$ and $\alpha$, we get
  \begin{dmath*}
    \Exv{ f(\tilde w_{k+1}) - f(w^*) }
    \le
    \gamma \Exv{ f(\tilde w_k) - f(w^*) }
    +
    \frac{d \delta^2}{8 \alpha (1 - 2 L \alpha) }
    =
    \gamma \Exv{ f(\tilde w_k) - f(w^*) }
    +
    \frac{d \delta^2}{8 \cdot \frac{\gamma}{4 L (1 + \gamma)} \cdot \left(1 - 2 L \cdot \frac{\gamma}{4 L (1 + \gamma)} \right) }
    =
    \gamma \Exv{ f(\tilde w_k) - f(w^*) }
    +
    \frac{d \delta^2}{8 \cdot \frac{\gamma}{4 L (1 + \gamma)} \cdot \frac{2 + \gamma}{2 (1 + \gamma)} }
    =
    \gamma \Exv{ f(\tilde w_k) - f(w^*) }
    +
    \frac{d \delta^2 L (1 + \gamma)^2}{ \gamma (2 + \gamma) }
    \le
    \gamma \Exv{ f(\tilde w_k) - f(w^*) }
    +
    \frac{2 d \delta^2 L}{ \gamma },
  \end{dmath*}
  where in the last line we use the fact that $1 + \gamma \le 2$ and $2 + \gamma \ge 2$.
  Now subtracting the fixed point of this expression from both sides,
  \begin{dmath*}
    \Exv{ f(\tilde w_{k+1}) - f(w^*) }
    -
    \frac{2 d \delta^2 L}{ \gamma (1 - \gamma) }
    \le
    \gamma \Exv{ f(\tilde w_k) - f(w^*) }
    +
    \frac{2 d \delta^2 L}{ \gamma }
    -
    \frac{2 d \delta^2 L}{ \gamma (1 - \gamma) }
    =
    \gamma \Exv{ f(\tilde w_k) - f(w^*) }
    +
    \frac{2 d \delta^2 L}{ \gamma } \left( 1 - \frac{1}{1 - \gamma} \right)
    =
    \gamma \left(
      \Exv{ f(\tilde w_k) - f(w^*) }
      -
      \frac{2 d \delta^2 L}{ \gamma (1 - \gamma) }
    \right).
  \end{dmath*}
  It follows by applying this statement recursively that
  \begin{dmath*}
    \Exv{ f(\tilde w_{K+1}) - f(w^*) }
    -
    \frac{2 d \delta^2 L}{ \gamma (1 - \gamma) }
    \le
    \gamma^K \left(
      f(\tilde w_1) - f(w^*)
      -
      \frac{2 d \delta^2 L}{ \gamma (1 - \gamma) }
    \right),
  \end{dmath*}
  or
  \[
    \Exv{ f(\tilde w_{K+1}) - f(w^*) }
    \le
    \gamma^K \left( f(\tilde w_1) - f(w^*) \right)
    +
    \frac{2 d \delta^2 L}{ \gamma (1 - \gamma) }.
  \]
  This is what we wanted to prove.
\end{proof}

\begin{proof}[Proof of Theorem~\ref{thmHALP}]
  The analysis of the inner loop of \sysname{} is identical to the analysis of LP-SVRG.
  By using the same argument as in the proof of Theorem~\ref{thmLPSVRG}, we can get that
  \begin{dmath*}
    \Exv{ f(\tilde w_{k+1}) - f(w^*) }
    \le
    \left( \frac{1}{\alpha \mu (1 - 2 L \alpha) T} + \frac{2 L \alpha}{1 - 2 L \alpha} \right) \Exv{ f(\tilde w_k) - f(w^*) }
    +
    \frac{d \delta^2}{8 \alpha (1 - 2 L \alpha) }.
  \end{dmath*}
  Unlike for LP-SVRG, for \sysname{}, the value of $\delta$ changes over time.
  Specifically, it is assigned to
  \[
    \delta = \frac{\norm{\tilde g_k}}{\mu (2^{b-1} - 1)}.
  \]
  As a result, we have
  \begin{dmath*}
    \Exv{ f(\tilde w_{k+1}) - f(w^*) }
    \le
    \left( \frac{1}{\alpha \mu (1 - 2 L \alpha) T} + \frac{2 L \alpha}{1 - 2 L \alpha} \right) \Exv{ f(\tilde w_k) - f(w^*) }
    +
    \frac{d \Exv{ \norm{\tilde g_k}^2 } }{8 \alpha \mu^2 (1 - 2 L \alpha) (2^{b-1} - 1)^2}.
  \end{dmath*}
  From Lemma~\ref{lemmaSVRG8}, we know that
  \begin{dmath*}
    \norm{\tilde g_k}^2
    =
    \norm{\nabla f(\tilde w_k) - \nabla f(w^*)}^2
    \le
    2 L \left( f(\tilde w_k) - f(w^*) \right).
  \end{dmath*}
  Thus,
  \begin{dmath*}
    \Exv{ f(\tilde w_{k+1}) - f(w^*) }
    \le
    \left( \frac{1}{\alpha \mu (1 - 2 L \alpha) T} + \frac{2 L \alpha}{1 - 2 L \alpha} \right) \Exv{ f(\tilde w_k) - f(w^*) }
    +
    \frac{2 L d \Exv{f(\tilde w_k) - f(w^*)} }{8 \alpha \mu^2 (1 - 2 L \alpha) (2^{b-1} - 1)^2}
    =
    \left( 
      \frac{1}{\alpha \mu (1 - 2 L \alpha) T} 
      +
      \frac{2 L \alpha}{1 - 2 L \alpha}
      +
      \frac{2 L d }{8 \alpha \mu^2 (1 - 2 L \alpha) (2^{b-1} - 1)^2}
    \right)
    \Exv{ f(\tilde w_k) - f(w^*) }
    =
    \left( 
      \frac{1}{\alpha \mu (1 - 2 L \alpha)} 
      \left(
        \frac{1}{T}
        +
        \frac{2 \kappa d }{8 (2^{b-1} - 1)^2}
      \right)
      +
      \frac{2 L \alpha}{1 - 2 L \alpha}
    \right)
    \Exv{ f(\tilde w_k) - f(w^*) }.
  \end{dmath*}
  Now, if we define $\hat T$ such that
  \[
    \frac{1}{\hat T}
    = 
    \frac{1}{T}
    +
    \frac{2 \kappa d }{8 (2^{b-1} - 1)^2},
  \]
  then this expression reduces to
  \begin{dmath*}
    \Exv{ f(\tilde w_{k+1}) - f(w^*) }
    \le
    \left( 
      \frac{1}{\alpha \mu (1 - 2 L \alpha) \hat T}
      +
      \frac{2 L \alpha}{1 - 2 L \alpha}
    \right)
    \Exv{ f(\tilde w_k) - f(w^*) }.
  \end{dmath*}
  Next, suppose as before that we want to contract by a factor of $\gamma$ in expectation each step.
  That is, we need
  \[
    \gamma
    =
    \frac{1}{\alpha \mu (1 - 2 L \alpha) \hat T}
    +
    \frac{2 L \alpha}{1 - 2 L \alpha}.
  \]
  The analysis of this is identical to that in the proof of Theorem~\ref{thmLPSVRG}.
  By this same analysis, the minimal value of $\hat T$ for which this will be able to hold will be when
  \[
    \hat T = \frac{8 \kappa (1 + \gamma)}{\gamma^2}.
  \]
  and
  \[
    \alpha = \frac{\gamma}{4 L (1 + \gamma)}.
  \]
  In order for $\hat T$ to have this magnitude, we need
  \begin{dmath*}
    \frac{1}{T}
    =
    \frac{1}{\hat T}
    -
    \frac{2 \kappa d }{8 (2^{b-1} - 1)^2}
    =
    \frac{\gamma^2}{8 \kappa (1 + \gamma)}
    -
    \frac{2 \kappa d }{8 (2^{b-1} - 1)^2}
    =
    \frac{\gamma^2 (2^{b-1} - 1)^2}{8 \kappa (1 + \gamma) (2^{b-1} - 1)^2}
    -
    \frac{2 \kappa^2 d (1 + \gamma)}{8 \kappa (1 + \gamma) (2^{b-1} - 1)^2}
    =
    \frac{
      \gamma^2 (2^{b-1} - 1)^2
      -
      2 \kappa^2 d (1 + \gamma)
    }{
      8 \kappa (1 + \gamma) (2^{b-1} - 1)^2
    }.
  \end{dmath*}
  So, $T$ is
  \begin{dmath*}
    T
    =
    \frac{
      8 \kappa (1 + \gamma) (2^{b-1} - 1)^2
    }{
      \gamma^2 (2^{b-1} - 1)^2
      -
      2 \kappa^2 d (1 + \gamma)
    }
    =
    \frac{
      8 \kappa (1 + \gamma)
    }{
      \gamma^2
      -
      2 \kappa^2 d (1 + \gamma) (2^{b-1} - 1)^{-2}
    }.
  \end{dmath*}
  If we assign $\alpha$ and $T$ in this way, as they are given in the theorem statement, then
  \begin{dmath*}
    \Exv{ f(\tilde w_{k+1}) - f(w^*) }
    \le
    \gamma
    \Exv{ f(\tilde w_k) - f(w^*) },
  \end{dmath*}
  and the result now follows by induction.
\end{proof}

\end{document}